\newtheorem{theorem}{Theorem}
\newtheorem{definition}{Definition}
\newtheorem{example}{Example}
\newtheorem{lemma}{Lemma}
\title{Polynomial-Time Relational Probabilistic Inference in Open Universes}
\author{
Luise Ge$^1$
\and
Brendan Juba$^1$\and
Kris Nilsson$^{1}$
\affiliations
$^1$Washington University in St. Louis\\
\emails
\{g.luise,bjuba,k.b.nilsson\}@wustl.edu,
}
\begin{document}

\maketitle

\begin{abstract}

Reasoning under uncertainty is a fundamental challenge in Artificial Intelligence. As with most of these challenges, there is a harsh dilemma between the expressive power of the language used, and the tractability of the computational problem posed by reasoning. Inspired by human reasoning, we introduce a method of first-order relational probabilistic inference that satisfies both criteria, and can handle hybrid (discrete and continuous) variables. Specifically, we extend sum-of-squares logic of expectation to relational settings, demonstrating that lifted reasoning in the bounded-degree fragment for knowledge bases of bounded quantifier rank can be performed in polynomial time, even with an \textit{a priori} unknown and/or countably infinite set of objects. Crucially, our notion of tractability is framed in \textit{proof-theoretic terms}, which extends beyond the syntactic properties of the language or queries.  We are able to derive the tightest bounds provable by proofs of a given degree and size and establish completeness in our sum-of-squares refutations for fixed degrees.

\end{abstract}

\section{Introduction}
%% Following the abstract:
%% probabilistic inference is important
Intelligent agents must cope with limits to their knowledge of the world. One of the most widely studied and mature approaches treats the state of the world as probabilistic, using inference from observed evidence to draw conclusions about what is likely—and what is not—regarding unobserved parts of the world, future states, and so on. 
%% relational representations are important

At the same time, it is profitable to use relational representations for our knowledge of the world. In this way, a single, short statement can assert a common collection of relationships about a large collection of similar objects.
 In a vast world, such relational knowledge enables an agent to draw on some basic understanding of situations, locations, or objects that it has never before encountered. 
 There is substantial evidence from cognitive science showing that a core aspect of human intelligence arises from relational thinking and reasoning \cite{hummel2003symbolic,krawczyk2012cognition}.
 And one can argue that relational representations are highly desirable and perhaps even essential for many areas of AI such as planning and natural language processing.
 
 %While the common machine learning approaches succeeded in handling uncertainties in some cases, they are mainly concerned with finite, propositional, feature-based representations of examples which have led to the failure of the latter \cite{sanford2024representational}. 

 %% We need tractable methods for probabilistic inference that use relational representations for large problems. That's what we do.

 Unfortunately, power rarely comes for free. Na\"{\i}ve approaches to both probabilistic inference and relational inference, that might explicitly consider the various possible states of the world, are not computationally feasible to perform at scale. Since our focus is on addressing a vast world, we need approaches that do not explicitly consider the possible world states, but instead operate at a more abstract ``lifted'' level, reasoning generically about entire classes of objects. In general, it is still challenging to carry out such a strategy efficiently, and this is an area of active research \cite{van2021introduction}. In this work, we consider a new logic with a tractable fragment that can capture knowledge that is out of reach of the existing tractable methods.

Our approach builds on and extends three distinct threads in the literature. The first is work on tractable probability logics. Our work presents a relational generalization of the sum-of-squares probability logic \cite{lasserre01,juba2019polynomial}, while retaining a powerful tractable fragment given by bounded-degree expressions. 

The second thread is relational optimization. Sum-of-squares, in the primal formulation, is a family of semidefinite optimization problems. Kersting \textit{et al.}~\shortcite{kmt17} and Mladenov \textit{et al.}~\shortcite{mladenov2017lifted} respectively proposed relational generalizations of linear and convex quadratic programs, and developed tools for efficiently solving such problems. Here, we show that the techniques may be extended to countably infinite domains, and present a logic that characterizes the power of the programs we obtain via these techniques. 

Finally, the third thread concerns the existing technique for inference in such infinite universes: Belle \shortcite{belle2017open} showed how to perform weighted first-order model counting in such open universes, which enables certain kinds of probabilistic inference. 
We extend Belle's approach, and overcome two main limitations: first, that first-order model counting is only known to be tractable for various two-variable fragments, 
and is postulated to be intractable for many three-variable formulas and beyond \cite{beame2015symmetric}. Second, Belle's technique to extend to infinite universes relies on compactness for Boolean propositional logic, which clearly does not hold even for linear arithmetic---consider the example of the formulas $e(x)\geq n$ for all integer $n$ (where $e(x)$ represents the expected value for $x$, which is included in our language), the infinite set of constraints $\{e(x)\ge n: n \ \text{ is an integer}\}$ is unsatisfiable, whereas any finite subset is satisfiable. 

Our generalization, in contrast to others, enables us to reason about the expected values of random variables that take numeric values as well as bounded-degree moments. 
\section{Related Work}
% probability logics
\textbf{Probability logics} Most directly, our work proposes a logic of expectation, as discussed by Halpern and Pucella~\shortcite{halpern2007characterizing}, extended to a first-order language with a powerful tractable fragment. It directly generalizes the tractable fragment considered by Juba~\shortcite{juba2019polynomial}. Indeed, it can also serve as a logic of probability, of the kind that Halpern \shortcite{halpern1990analysis} calls a ``type 2'' first-order logic of probability -- meaning, the probabilities are taken over valuations of the formulas, as opposed to bindings from the domain of quantification. (See Appendix~\ref{appen:expecation} for an example)

Alternatively, our knowledge is expressed as a combination of logical constraints and bounds on the marginals of certain expressions; as shown by Ku{\v{z}}elka \textit{et al.}~\shortcite{kuzelka2018relational}, if we consider a maximum-entropy objective, marginal equality constraints alone capture Markov Logic Networks \cite{richardson2006markov}, so we may obtain a (strict) generalization of such models. 
But rather than assuming interest in a maximum-entropy model, we will be primarily interested in what values may possibly be taken by the expected value of other expressions.

% lifted (hybrid) probabilistic inference
% weighted first order model counting
% -> weighted first order model integration
\noindent \textbf{Markov Logic} Indeed, Markov Logic is a prominent example of the vast family of different models that have been considered as targets for probabilistic modeling and inference with relational representations; Van den Broeck et al.~\shortcite{van2021introduction} give a broad overview of this area. 
Typically, inference in these models is reduced to Weighted (First-Order) Model Counting (WFOMC) \cite{cd08,van2011lifted,van2014skolemization}, in which possible worlds are assigned a (pseudo-) likelihood given by the product of the (explicitly defined) weights associated with the specified formulas satisfied in each such possible world. Indeed, the likelihoods assigned to possible worlds by Markov Logic (for example again) are usually defined in such a way. Again, in contrast to these works, our focus is not specifically on inference in models of such form. 

Moreover, we consider expectation more generally than probability per se -- i.e., expressions that may take numeric values. 
In the literature, such models are referred to as ``hybrid'' probabilistic models, and reduced to Weighted (First-Order) Model Integration \cite{belle2015probabilistic,feldstein2021lifted}. This problem is generally more challenging and is only tractable in limited cases \cite{zeng2020probabilistic,feldstein2021lifted}.
% domain liftability & tractability
Indeed, even WFOMC, which only concerns probability, is only known to be tractable in various two-variable fragments \cite{van2011lifted,van2014skolemization,beame2015symmetric,kazemi2016new,kazemi2017domain,kuusisto2018weighted,kuzelka2021weighted,van2023lifted,toth2023lifted}, and is believed to be intractable for some three-variable fragments \cite{beame2015symmetric}. 

Although our tractable fragment similarly requires some kind of bound on the quantifier rank, each finite bound corresponds to a polynomial-time fragment, and there is no such hard barrier against the many-variable fragment.

As mentioned previously, Belle's work \shortcite{belle2017open} extends these WFOMC-based approaches to infinite and open universe settings; but, being based on WFOMC, it also inherits the above barriers to tractability. 

\noindent\textbf{Other approaches} In addition, a number of other languages have been proposed for reasoning about probability in open universes, but without promises of tractability or generalization to reasoning about expectation \cite{poole2003first,milch2005blog,carbonetto2005nonparametric,poole2008independent}.

\section{Background and Notation}
By reasoning, we refer to the ability to resolve a query based on a knowledge base, both of which are encoded using the logic fragment we have defined. In this section, we first introduce the first-order probabilistic relational logic that underpins our inference method. We then discuss how sum-of-squares relaxation techniques can be applied to facilitate efficient reasoning at the propositional level, leaving the discussion of lifted reasoning for the following section.

%\subsection{Relational Arithmetic Logic}
\subsection{(First-Order) Probability Relational Logic}

\noindent\textbf{Language:} We start by defining a first-order language $\mathcal{L}$ that includes relational symbols $\{P(x), Q(x,y), R(x,y,z), ..., P'(x),...\}$ of every arity, variables $\mathcal{V}=\{x,y,z,...\}$, and a countably infinite set of names $\mathcal{N}$ serving as the domain of quantification. 
These names can be thought of as the integers in $\mathbb{N}$, but we  use proper names $\{john, james, jane,...\}$ for readability.
 We will also, in general, consider a finite set of constants $\mathcal{C}\subseteq\mathcal{N}$. and refer to $\mathcal{G}=\mathcal{N}\setminus\mathcal{C}$ as generic names. 
A renaming substitution is given by a permutation on $\mathcal{G}$, that is extended to $\mathcal{N}$ by the identity map on $\mathcal{C}$. 
The logical terms of the logic $\mathcal{T}$ are given by a relation symbol together with a tuple of the corresponding arity from $\mathcal{V}\cup \mathcal{C}$.
 A \textit{ground} term is given by a relation symbol together with a tuple of the corresponding arity from $\mathcal{N}$. 
 %Although for now we are currently only concerned with formulating logical constraints on the domain, looking ahead, these ground terms will represent our random variables with numeric values taken.

In general, we consider expressions of the form of polynomial inequalities, where the indeterminates are given by terms: a monomial $\mu$ is given by a finite subset of $\mathcal{T},$ $\tau_1\ldots,\tau_k$ with corresponding positive integer exponents $d_1\ldots,d_k$: $\mu=\tau_1^{d_1}\cdots\tau_k^{d_k}$. We refer to $\deg(\mu)=\sum_{i=1}^kd_i$ as the degree of the monomial. 
For each monomial $\mu$, we have a moment term $e(\mu)$.

A polynomial inequality is now given by a finite set of monomials $M$, together with a real-valued coefficient $c_\mu$ for each monomial $\mu\in M$, and a numeric relation symbol from $\{\geq,=\}$. We will write these expressions as $\sum_{\mu\in M}c_\mu \mu \geq 0$ and $\sum_{\mu\in M}c_\mu \mu = 0$, respectively. We now refer to $\max_{\mu\in M}\deg(\mu)$ as the degree of the polynomial inequality expression.
Now, our polynomial inequalities will in general be bound by universal quantifiers, and analogous to the languages introduced by Lakemeyer and Levesque \shortcite{lakemeyer2002evaluation}, we will allow the domain of quantification to be specified by the following kind of equality expressions: these have atomic formulas consisting of a pair from $\mathcal{V}\cup\mathcal{C}$ (variables or constants), which we write e.g.\ as $x=y$, $jane = x$, etc., that are given by any Boolean expression on these atomic formulas (using the usual De Morgan connectives, $\land,\lor,\neg$).

\noindent\textbf{Logical Constraints} Now, for each pair of an equality expression $\Xi$ and polynomial inequality $\Lambda$, we have a logical constraint formula $\Phi=\forall \Xi\supset\Lambda$. When $\Xi$ is a trivial tautology, we simply write $\forall\Lambda$. We refer to the number of distinct variables occurring in $\Xi$ and $\Lambda$ together as the quantifier rank of $\Phi$. The semantics of $\Phi$ is that for all substitutions of names for variables $\theta$ satisfying $\Xi$, the same substitution into the polynomial inequality $\Lambda\theta$ is also satisfied.

\begin{example}
Although in general the relation symbols are interpreted as numeric indeterminates, we can assert that relations $P$ and $Q$ take Boolean values by the formulas $\forall P(x)^2-P(x)=0$ and $\forall Q(x,y)^2-Q(x,y)=0$. We can moreover define a negative literal for each relation symbol by considering another relation $\bar{P}$ and relating its value to $P$ like so: $\forall P(x)+\bar{P}(x)-1=0$. 

There are two ways of encoding clauses. The first approach uses a monomial of Boolean terms to represent a conjunction, and uses an equality formula to assert that the conjunction is false; by De Morgan's law, this is a clause. Concretely, the clause $\forall x,y P(x)\lor \neg Q(x,y)$ is represented by $\forall \bar{P}(x)Q(x,y) = 0$. This method produces monomials with degree equal to the width of the clause.  The second approach uses a linear inequality, asserting at least one term is true: we represent $\forall x,y P(x)\lor \neg Q(x,y)$ by 
$\forall P(x)+\bar{Q}(x,y)-1\geq 0$. These expressions have degree one. The two different encodings can be used to simulate two different tractable logics.

Finally, the equality formulas allow us to write expressions such as $\forall x\neq y\land x\neq jane\supset P(x)+\bar{Q}(x,y)-1\geq 0$, which is the aforementioned clause with additional restrictions on the possible bindings. As with Belle \shortcite{belle2017open}, we can thereby represent proper+ knowledge bases \cite{lakemeyer2002evaluation}, which consist of universal clauses with such equality expressions defining the domain of quantification.
\end{example}

%\subsection{Expectation Logic}
\noindent \textbf{Expectation Constraints} Next, we use these relational symbols, and their associated variables, as the random variables in our expectation logic in the sense of Halpern and Pucella \shortcite{halpern2007characterizing}. The term $e(P(james,jane))$ represents the expected value of the relation $P$ on the names of $james$ and $jane$. 
We now consider expectation bound expressions that are defined by a finite set of moment terms $M$, together with real-valued coefficients $c_{e(\mu)}$ for each moment $e(\mu)\in M$, corresponding to the linear inequality $\sum_{e(\mu)\in M}c_{e(\mu)}e(\mu)\geq 0$. The degree of the expectation bound is similarly defined to be $\max_{e(\mu)\in M}\deg(\mu)$, where we also refer to the degree of the monomial $\mu$ as the degree of the moment $e(\mu)$. Similarly to the logical constraint formulas, for each pair of an equality expression $\Xi$ and expectation bound $B$, we have an expectation constraint formula $\Psi=\forall \Xi\supset B$. The quantifier rank of an expectation constraint $\Psi$ is likewise equal to the number of distinct variables occurring in $\Xi$ and $B$ together. 
Again, the semantics is that for any substitution of names for variables $\theta$ satisfying $\Xi$, the substitution into the expectation bound $B\theta$ holds.

\noindent{\textbf{Knowledge Base and Query:}} In general, we define a knowledge base $\Delta$ as a finite non-empty set of
logical and expectation constraint formulas about various relations. 
A query $q$ is another constraint formula pending verification. 

Thus, assuming the knowledge base is satisfiable, checking whether the query is compatible with it is equivalent to checking the consistency of the expanded knowledge base $\Delta\cup q$. From now on, we will refer to 
$\Delta\cup q$ simply as the knowledge base for brevity.

\noindent{\textbf{Semantics:}}  A model $\mathcal{M}$ for the knowledge base $\Delta$ is given by a probability measure space, together with an assignment of a measurable function for each binding of a relational variable to a tuple of names of the appropriate arity, such that the logical constraints in $\Delta$ are satisfied with probability $1$, and the expectation bounds are also satisfied for all bindings.

Now we are ready to resolve queries using knowledge bases in the described form. For instance, as the expectation of a Boolean variable coincides with the probability of the variable taking the value one, we can directly reason about various conditional probabilities, as illustrated in the following example.

\begin{example} \label{em:LT} 

%In the following example, we would like to speculate the likelihood of a war between Antony and Octavian. We will consider a binary relation Battle, and an ternary Boolean relation LoveTriangle, with constants Antony and Cleopatra. We can encode an assertion like ``$E[War(x,y)|LoveTriangle(x,y,z)]\geq .75$'' by using the definition of conditional probability to rewrite it as ``$E[War(x,y)\land LoveTriangle(x,y,z)]\geq .75 E[LoveTriangle(x,y,z)]$.'' Thus we obtain a knowledge base
%\begin{align*}\small\forall LoveTriangle(x,y,z)^2-LoveTriangle(x,y,z) &= 0\\\forall e(Battle(x,y) LoveTriangle(x,y,z))\qquad&\\-5 e(LoveTriangle(x,y,z) &\geq 0.\end{align*}Given also $\forall x\neq Antony\land x\neq Cleopatra\supset e(LoveTriangle(x,Antony,Cleopatra)\geq .75$,  this would imply that Antony (in expectation) is fighting in ???battles with everyone in the universe (apart from himself and Cleopatra), including Octavian.

In the following example, we would like to speculate the likelihood of a war between Antony and Octavian. We will consider a binary Boolean relation \textbf{War} and an ternary Boolean relation \textbf{LoveTriangle}, with constants Antony and Cleopatra. We can encode an assertion like ``$\Pr[\text{War}(x,y)|\text{LoveTriangle}(x,y,z)]\geq .75$'' by using the definition of conditional probability to rewrite it as ``$\Pr[\text{War}(x,y)\land \text{LoveTriangle}(x,y,z)]\geq .75 \Pr[\text{LoveTriangle}(x,y,z)]$.'' Thus we obtain a knowledge base
\begin{equation*}
\small
\begin{aligned}
\forall \text{LoveTriangle}(x,y,z)^2-\text{LoveTriangle}(x,y,z) &= 0\\
\forall \text{War}(x,y)^2-\text{War}(x,y) &= 0\\
\forall e(\text{War}(x,y)\text{LoveTriangle}(x,y,z))\qquad&\\
-.75 e(\text{LoveTriangle}(x,y,z)) &\geq 0.
\end{aligned}
\end{equation*}
Given also: 
\begin{equation*}
\small
\begin{aligned}
    \forall x\neq Antony\land x\neq Cleopatra\\
    \supset e(\text{LoveTriangle}(x,Antony,&Cleopatra)\geq 1
\end{aligned}
\end{equation*}
this would imply that Antony has a greater than 75\% chance of being at war with everyone in the universe (apart from himself and Cleopatra), including Octavian.
\end{example}

%\begin{itemize} 
%\item  $\forall x,y,z. LoveTriangle(x,y,z)^2-LoveTriangle(x,y,z)=0$
%\item  $\forall x,y. Betray(x,y)^2-Betray(x,y)=0$
%\item  $\forall x,y. Family(x,y)^2-Family(x,y)=0$
%\item  $\forall x,y. War(x,y)\ge 0$
%\item  $\forall x,y. 1-War(x,y)\ge 0$
%\item  $\forall x,y,z. LoveTriangle(x,y,z) \land  Betray(x,y) \supset War(x,y) \ge 0.5$
%\item  $\forall x,y,z. Family(x,y) \land  War(x,z) \ge 0.5 \supset War(y,z) \ge 0.4$
%\item $LoveTriangle(Cleopatra, Antony, Octavia)$
%\item $Marry(Antony, Octavia) $\end{itemize}

\subsection{Propositional Sum-of-squares Refutations}
At the propositional level, a knowledge base consists solely of a collection of \textit{ground} logical constraints and \textit{ground} expectation bounds.
A sum-of-squares refutation demonstrates that it is impossible for a joint distribution on random variables to be consistent with the knowledge base, reducing reasoning to a moment problem. %\footnote{Classically, such logics were known as Positivstellensatz, but sum-of-squares is more suggestive.} 

A convenient form for our purposes is due to Putinar~\shortcite{putinar}: suppose our knowledge base consists of logical inequalities $\{g_i\geq 0\}_{i\in I}$, equalities $\{h_j=0\}_{j\in J}$, and expectation bounds $\{b_k\geq 0\}_{k\in K}$, where the various $g_i$, $h_j$, and $b_k$ are written as polynomials in the relations. A sum-of-squares polynomial $\sigma$ is, as the name suggests, equal to a sum of squares of arbitrary polynomials $\sum_{\ell\in L}(p_\ell)^2$. Now, for sum-of-squares polynomials $\sigma_0$ and $\sigma_i$ for $i\in I$, arbitrary polynomials $q_j$ for $j\in J$, and nonnegative real numbers $r_k$ for $k\in K$, suppose we have an expression
\begin{equation}
\small
    \sigma_0+\sum_{i\in I}\sigma_ig_i+\sum_{j\in J}q_jh_j+\sum_{k\in K}r_kb_k
=-1 \label{e:sos_proof} \tag{1}.
\end{equation}

 This is called a sum-of-squares refutation. It is sound, since for any joint distribution on the relational variables the sum-of-squares polynomials must be nonnegative; assuming that the logical constraints hold, and hence that each $g_i$ is nonnegative and each $h_j$ is identically $0$ over the support of the distribution, we see that by linearity of the expectation operator, the sum-of-squares expression must be nonnegative. Since, it is formally equal to $-1$, which certainly has negative expectation.

In order to establish a sum-of-squares refutation, we need to find 
$\sigma_0$ and $\sigma_i$ for $i\in I$, arbitrary polynomials $q_j$ for $j\in J$, and nonnegative real numbers $r_k$ for $k\in K$ satisfying the above equality~\ref{e:sos_proof}. A manually established sum-of-squares refutation is provided below.

\begin{example}\label{ex:cheby}
We demonstrate a proof of Chebyshev's inequality in sum-of-squares. This is a standard tool in probability with a simple, low-degree proof. We suppose $X$ is a mean-$0$ random variable with variance $\lambda>0$. We define a tail event $T$ to be a Boolean (indicator) random variable corresponding to $|X|$ exceeding $\sqrt{k\lambda}$ -- or equivalently, $X^2\geq k\lambda$. For any given $\delta>0$, we'll prove that the probability of $T$ is at most $\frac{1}{k}+\delta$. Formally, now, we have the system:
\begin{equation*}
    \small
    \begin{aligned}
T(X^2-k\lambda) &\geq 0\\
(1-T)(k\lambda-X^2) &\geq 0\\
T^2-T &=0\\
e(X^2) - \lambda &= 0\\
e(X) &= 0\\
e(T) - (\frac{1}{k}+\delta) &\geq 0
\end{aligned}
    \end{equation*}
which we wish to refute. The sum-of-squares expression
\begin{equation*}
    \small
    \begin{aligned}
    & ((1-T)X)^2 + T(X^2 - k\lambda) - X^2(T^2 - T) \\
    & - (X^2 - \lambda) + k\lambda \left(T - \left( \frac{1}{k} + \delta \right) \right)
    \end{aligned}
    \end{equation*}
is formally equal to $-k\lambda\delta$, so by rescaling by $\frac{1}{k\lambda\delta}$, we obtain a sum-of-squares refutation. Observe that the total degree of the expression is 4.
\end{example}

\subsection{Tractability of Constant-Degree Sum-of-Squares (SOS)}

Deriving a sum-of-squares refutation is a non-trivial task, as demonstrated by Example~\ref{ex:cheby}. Luckily, as noted independently by many authors \cite{shor87,nesterov00,parrilo00,lasserre01}, for any fixed $d\in\mathbb{N}$, the fragment of ground sum-of-squares consisting of expressions of degree at most $d$ is tractable: suppose $\vec{R}$ is the vector of all ground monomials of relational random variables up to degree $d/2$ (assume $d$ is even), ordered by increasing total degree, including the degree-0 constant $1$. We refer to the expectation of the outer-product $\vec{R}\vec{R}^\top$ as a moment matrix $[M]$. For a polynomial $p$, we define the degree-$d$ localizing matrix $[pM]$ to be, for the largest index $s^*$ such that the degree of $R_{s^*}$ is at most $d/2-\deg(p)$, the expectation of $p\cdot \vec{R}_{1:s^*}\vec{R}_{1:s^*}^\top$. Observe that this matrix is constructed so that all expressions appearing in it have total degree at most $d$. Finally, the degree-$d$ sum-of-squares semidefinite program corresponding to logical inequalities $\{g_i\geq 0\}_{i\in I}$, equalities $\{h_j=0\}_{j\in J}$, and expectation bounds $\{b_k\geq 0\}_{k\in K}$ is
\begin{equation*}
\small
\begin{aligned}
[M]&\succeq 0\\
[g_iM]&\succeq 0&&i\in I\\
[h_jM]& = 0&&j\in J\\
b_k&\geq 0&&k\in K
\end{aligned}
\end{equation*}
Assuming that the logical inequalities assert that each term only obtains bounded values -- specifically, for all ground terms $\tau$, $\tau^2\leq U$ for some common $U$ -- this program is infeasible iff there is a degree-$d$ sum-of-squares refutation. (Such systems are said to be explicitly compact.) Indeed, degree-$d$ refutations are described by the dual semidefinite program. Hence, we can test for the existence of refutations with coefficients up to a given size using a polynomial-time algorithm for deciding the feasibility of this semidefinite program:

\begin{theorem}\textup{(Soundness \cite{shor87,nesterov00,parrilo00,lasserre01}).}\label{thm:sossound}
Let $\{g_i\geq 0\}_{i\in I}, \{h_j=0\}_{j\in J}, \{b_k\geq 0\}_{k\in K}$ be a system of constraints that is explicitly compact. Then either there is a degree-d sum-of-squares refutation or there is a solution to the degree-d sum-of-squares semidefinite program.
\end{theorem}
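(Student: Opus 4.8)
\emph{Proof plan.} The plan is to recognize this as a strong-duality (theorem-of-the-alternative) statement for the finite-dimensional primal--dual pair formed by the moment relaxation and the degree-$d$ sum-of-squares certificates, and to argue that \emph{explicit compactness} is precisely the regularity condition that forbids a duality gap. Let $V$ be the finite-dimensional real vector space of polynomials of degree at most $d$ in the ground relational monomials, and let
\[
\mathcal{Q} \;=\; \Bigl\{\, \sigma_0 + \sum_{i\in I}\sigma_i g_i + \sum_{j\in J}q_j h_j + \sum_{k\in K}r_k b_k \;:\; \sigma_0,\ \sigma_i\ \text{sums of squares},\ q_j\ \text{arbitrary},\ r_k \ge 0,\ \text{all summands of degree} \le d \Bigr\} \subseteq V .
\]
By definition, a degree-$d$ sum-of-squares refutation is exactly a witness that $-1 \in \mathcal{Q}$.

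First I would set up the dictionary between $\mathcal{Q}$ and the semidefinite program. Unwinding the definitions of the moment matrix $[M]$ and the localizing matrices, a solution of the degree-$d$ sum-of-squares semidefinite program is exactly a linear functional $L : V \to \mathbb{R}$ with $L(1) = 1$ that is nonnegative on $\mathcal{Q}$: the condition $[M] \succeq 0$ is ``$L(\sigma_0) \ge 0$ for every sum of squares $\sigma_0$ of degree $\le d$'', the conditions $[g_iM]\succeq 0$ are ``$L(\sigma_i g_i)\ge 0$ for every such $\sigma_i$'', the conditions $[h_jM]=0$ are ``$L(q_j h_j)=0$ for every $q_j$'' (using that $\mathcal{Q}$ contains $\pm q_j h_j$), and the constraints $b_k\ge 0$ are ``$L(b_k)\ge 0$''. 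In this language the soundness of refutations already argued in the text is just weak duality: if $-1\in\mathcal{Q}$ then any such $L$ satisfies $L(1) = -L(-1) \le 0$, contradicting $L(1)=1$; so the two alternatives are mutually exclusive.

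It remains to show that if the semidefinite program has no solution then $-1 \in \mathcal{Q}$. Suppose $-1 \notin \mathcal{Q}$. \emph{Provided $\mathcal{Q}$ is closed}, the separating-hyperplane theorem applied to the convex cone $\mathcal{Q}$ and the point $-1$ produces a linear functional $L$ on $V$ with $L \ge 0$ on $\mathcal{Q}$ and $L(-1) < 0$, hence $L(1) > 0$; rescaling to $L(1)=1$ and reading the dictionary backwards gives a solution of the semidefinite program, a contradiction. So the argument reduces entirely to the closedness of $\mathcal{Q}$, and establishing this is the main obstacle --- it is exactly the point at which explicit compactness is used. Explicit compactness puts each inequality $U - \tau^2 \ge 0$ among the $g_i$, which forces every consistent degree-$d$ pseudo-expectation $L$ to satisfy $L(\mu^2) \le U^{\deg \mu}$ for every monomial $\mu$ of degree at most $d/2$; this bounds the diagonal entries of $[M]$ and hence, by positive-semidefiniteness, all entries of $[M]$, so the feasible set of the semidefinite program is a bounded (indeed compact) spectrahedron. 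From here I would invoke the standard fact that the degree-$d$ truncation of an Archimedean quadratic module --- here extended by the ideal of the $h_j$ and the finite cone generated by the $b_k$ --- is closed, a strong-duality strengthening of the classical moment--sum-of-squares correspondence \cite{shor87,nesterov00,parrilo00,lasserre01}. If one prefers a self-contained argument, the boundedness just obtained lets one normalize and extract, from any sequence of elements of $\mathcal{Q}$ converging to a limit point, a convergent subsequence of the underlying Gram matrices of the $\sigma_i$, coefficient vectors of the $q_j$, and scalars $r_k$, so that the limit point is itself realized in $\mathcal{Q}$; carrying out this compactness argument rigorously is the one step I expect to demand genuine care.
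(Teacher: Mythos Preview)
The paper does not actually prove this theorem; it is quoted with citations to \cite{shor87,nesterov00,parrilo00,lasserre01} and used as a black box, with only the one-line remark that ``degree-$d$ refutations are described by the dual semidefinite program'' preceding the statement. So there is no proof in the paper to compare your proposal against.

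Your outline---recasting the statement as strong duality for the conic pair $(\mathcal{Q},\mathcal{Q}^*)$, identifying solutions of the semidefinite program with normalized linear functionals in $\mathcal{Q}^*$, and reducing everything to closedness of the truncated cone $\mathcal{Q}$---is the standard route and is correct in structure. The dictionary you set up between the moment/localizing-matrix constraints and nonnegativity of $L$ on the generators of $\mathcal{Q}$ is exactly right, and your observation that mutual exclusivity of the two alternatives is just weak duality matches the informal soundness argument the paper gives in the text before the theorem.

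There is, however, a real gap in the closedness step as you have written it. What you derive from explicit compactness is that the entries of the moment matrix $[M]$ are bounded, i.e., that the feasible set of pseudo-expectations $L$ is compact. That is a statement about the \emph{dual} cone $\mathcal{Q}^*$. Closedness of $\mathcal{Q}$ requires control on the \emph{primal} side: given $p_n\to p$ with each $p_n\in\mathcal{Q}$, you must bound the Gram matrices of the $\sigma_i^{(n)}$, the coefficient vectors of the $q_j^{(n)}$, and the scalars $r_k^{(n)}$ so that a subsequence converges. Boundedness of the dual feasible set does not deliver this; there is no automatic passage from ``$\mathcal{Q}^*\cap\{L(1)=1\}$ is compact'' to ``representations in $\mathcal{Q}$ of a bounded element can be taken bounded.'' The actual argument uses the constraints $U-\tau^2\ge 0$ directly on the primal side to bound the traces of the Gram matrices (and hence all coefficients) in any representation of a bounded element of $\mathcal{Q}$. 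You correctly flag this step as the one demanding care, but your sketch as written supplies the wrong ingredient for it.
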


Lasserre~\shortcite{lasserre01} showed that ground sum-of-squares is complete for explicitly compact knowledge bases: for every inconsistent system, there is a sum-of-squares refutation that, in particular, has some finite degree. 

\begin{theorem}\textup{(Completeness, a corollary of \cite{putinar}).}\label{thm:soscomplete}
There exists a probability distribution with expected values $\{e(x^{\vec{\alpha}})\}_{\vec{\alpha} \in \mathbb{N}^n}$ supported on a set given by an explicitly compact system $\{g_i\geq 0\}_{i\in I}, \{h_j=0\}_{j\in J}, \{b_k\geq 0\}_{k\in K}$ iff every moment matrix is positive semidefinite, every localizing matrix for each $g_i$ is positive semidefinite, every localizing matrix for each $h_j$ is zero, and the bounds $b_k$ are nonnegative (i.e., solutions exist for all degrees d).
\end{theorem}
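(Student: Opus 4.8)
The plan is to derive this from Putinar's Positivstellensatz via the standard duality between moment sequences and positive polynomials, using explicit compactness exactly to supply the Archimedean hypothesis that Putinar's theorem requires. One direction is immediate. Suppose $\mu$ is a probability measure supported on $S=\{x:\ g_i(x)\ge 0\ \forall i\in I,\ h_j(x)=0\ \forall j\in J\}$ with $e(x^{\vec\alpha})=\int x^{\vec\alpha}\,d\mu$. By linearity of expectation, for any polynomial $p$ the matrix $\int p(x)^2\,\vec R\vec R^\top d\mu$ is a Gram matrix and hence PSD, $\int g_i(x)p(x)^2\,d\mu\ge 0$ because $g_i\ge 0$ on $S$, and $\int h_j(x)p(x)\,d\mu=0$ because $h_j\equiv 0$ on $\operatorname{supp}\mu$; these say precisely that $[M]\succeq 0$, $[g_iM]\succeq 0$, and $[h_jM]=0$ at every degree. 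Each bound $b_k$ is a fixed linear form in the $e(x^{\vec\alpha})$, so $b_k\ge 0$ is a condition on the moment sequence itself and holds by hypothesis.

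For the converse, first note that explicit compactness means that for every coordinate $x_\ell$ the system contains a constraint of the form $U-x_\ell^2\ge 0$; summing these shows $nU-\sum_\ell x_\ell^2$ lies in the quadratic module $Q$ generated by $\{g_i\}_{i\in I}\cup\{h_j,-h_j\}_{j\in J}$, so $Q$ is Archimedean and $S$ is compact. Taking the hypothesis in its strong form, we are given a single moment sequence $\{e(x^{\vec\alpha})\}_{\vec\alpha\in\mathbb N^n}$ (with $e(x^{\vec 0})=1$) all of whose truncated moment and localizing matrices satisfy the stated conditions; if instead only per-degree feasibility is assumed, the bounds $e(x_\ell^{2k})\le U^k$ (obtained from the localizing matrix for $U-x_\ell^2$ together with PSD-ness of $[M]$) confine every feasible sequence to a fixed compact product of intervals, so the decreasing family of nonempty closed sets of sequences feasible at each degree has a common point by Tychonoff and the finite intersection property, and we take that point. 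Define the Riesz functional $L:\mathbb R[x_1,\dots,x_n]\to\mathbb R$ by $L(x^{\vec\alpha})=e(x^{\vec\alpha})$, extended linearly. The assumed PSD/zero conditions on all moment and localizing matrices are exactly the statements $L(p^2)\ge 0$, $L(g_ip^2)\ge 0$, and $L(h_jp)=0$ for all polynomials $p$, i.e. $L\ge 0$ on $Q$.

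Now invoke Putinar's theorem: since $Q$ is Archimedean and $L$ is nonnegative on $Q$, there is a positive Borel measure $\mu$ (a probability measure after using $L(1)=1$) with $L(f)=\int f\,d\mu$ for all $f\in\mathbb R[x]$. The vanishing $L(h_jp)=0$ for all $p$ forces $\operatorname{supp}\mu\subseteq\{h_j=0\}$, and $L(g_ip^2)\ge 0$ for all $p$ forces $\operatorname{supp}\mu\subseteq\{g_i\ge 0\}$, so $\mu$ is supported on $S$. Finally, for each $k$ we have $\int b_k\,d\mu=L(b_k)\ge 0$ since $b_k$ is a linear combination of monomials and $L$ reproduces the given moments, which satisfy $b_k\ge 0$ by assumption. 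Thus $\mu$ is the desired representing distribution.

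I expect the main obstacle to be the compactness bookkeeping rather than any deep new argument: one must verify that explicit compactness really yields an \emph{Archimedean} quadratic module (and not merely a bounded semialgebraic set), since that is both what makes Putinar's theorem applicable and what licenses the passage from the per-degree semidefinite programs to one consistent infinite moment sequence; and one must handle the equality constraints $h_j=0$ correctly inside Putinar's framework, either by treating each as the pair $h_j\ge 0,\ -h_j\ge 0$ or, equivalently, by working modulo the ideal they generate. Everything else is a routine translation between the matrix-PSD language of the statement and the functional-analytic language of the moment problem.
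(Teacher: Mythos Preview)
Your proposal is correct and is the standard derivation of this result from Putinar's Positivstellensatz. The paper itself does not supply a proof of this theorem: it is stated as a known result, attributed to Lasserre and labeled ``a corollary of \cite{putinar},'' so there is no in-paper argument to compare against. What you have written is exactly the kind of argument one gives to justify that attribution --- explicit compactness yields an Archimedean quadratic module, the PSD/zero conditions on moment and localizing matrices translate to nonnegativity of the Riesz functional on that module, and Putinar then produces the representing measure --- and your handling of the equalities $h_j=0$ and of the passage from per-degree feasibility to a single limiting moment sequence via the uniform bounds $e(x_\ell^{2k})\le U^k$ is the right bookkeeping.
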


Naturally, there is no reasonable bound on the degree, and even for systems in which all of the relational variables are Boolean, the degree may need to be linear in the number of such variables; observe that in general, the number of monomials of a given degree is exponential in the degree. Hence, this corresponds to an exponential-size semidefinite program formulation.

But, the constant-degree fragments are already quite expressive. Many theorems of probabilistic analysis are captured by these fragments \cite{barak2012hypercontractivity,oz13,dmn13,kotz14}. More generally, Berkholz~\shortcite{berkholz18} showed that constant-degree fragments of ground sum-of-squares simulate the constant degree fragments of polynomial calculus~\cite{cei96}, and Juba~\shortcite{juba2019polynomial} showed that they also simulate bounded-space treelike resolution~\cite{et01}, which naturally generalizes unit propagation and other tractable fragments of resolution \cite{ablm08}.

We remark briefly that extensions of Theorem~\ref{thm:soscomplete} to countably-infinite dimension have been obtained; see \cite{ghasemi2016moment} and \cite{curto2023truncated}. While conceptually pleasing, as with the finite-dimensional (ground) case, these cannot provide practically usable bounds on the size of the refutation. For the sake of efficient algorithms, we will have to focus on a limited fragment.

\section{Efficient Reasoning in Open Universes}
From this point onward, we enter the first-order world with an \textit{open-universe} (OU) setting, where we do not assume a complete knowledge of all the objects in the universe. Thus, the ability to reason in an open universe demonstrates robustness and adaptability in the face of uncertainty. This assumption determines how we need to ground our knowledge base.

\subsection{Grounding in Probability Logic}
%The knowledge base $\Delta$ we are concerned with consists of a finite non-empty set of first-order logical constraints and expectation constraint formulas. 
A ground theory is obtained from $\Delta$ by substituting variables with names. Let the rank of $\Delta$ be the maximum quantifier rank of any formula in $\Delta$. Given a knowledge base of this form, we are going to define a grounding of $\Delta$ as a substitution of all variables in $\Delta$ with names. 
\begin{definition}
\begin{equation*}
    \text{GND}(\Delta) = \{\phi\theta | [\forall \Xi\supset\phi] \in \Delta, \vDash \Xi\theta\}
\end{equation*} Typically, to limit the domain of variables used in a grounding we will use:
\begin{equation*}
    \text{GND}(\Delta,k) = \{\phi\theta |  [\forall \Xi\supset\phi] \in \Delta, \vDash \Xi\theta, \theta \in K\}
\end{equation*}where $K$ consists of all constants present in $\Delta$ plus $k$ additional \textit{generic} names, for $k \geq 0$. 
\end{definition}
The generic names capture the values the formula may take when its variables are bound to elements outside the set of constants. %\textcolor{red}{what is a generic name}
As opposed to the \textit{domain closure assumption}, where we do not consider any objects outside of the knowledge base (i.e., $DC(\Delta)=GND(\Delta,0)$), we will be capturing an open universe via $OU(\Delta)=GND(\Delta,k)$, where $k$ is exactly the rank of $\Delta$. Intuitively, we need at least $k$ distinct names since $\Xi$ can require that all variables are bound to distinct elements, but moreover we will see (cf.\ Theorem~\ref{thm:inf-pseudo}) that $k$ generic names are also sufficient to capture the behavior of all possible groundings.

\begin{example} \label{em:GroundLT}Continuing our previous example, we can get a clearer picture of Antony's ongoing wars. For the knowledge base $\Delta$ as defined in Example \ref{em:LT}, we get GND($\Delta$, 3) with 3 generic names $Octavian, Caesar, Cicero$ (ignoring the Boolean axioms and equality constraints):
\begin{equation*}
\small
\begin{aligned}
e(\text{War}(Oct., Ant.) \text{L.T.}(Oct., Ant., Cleo.))\qquad&\\
-.75 e(\text{L.T.}(Oct., Ant., Cleo.)) &\geq 0\\
e(\text{War}(Cae., Ant.) \text{L.T.}(Cae., Ant., Cleo.))\qquad&\\
-.75 e(\text{L.T.}(Cae., Ant., Cleo.)) &\geq 0\\
e(\text{War}(Cic., Ant.) \text{L.T.}(Cic., Ant., Cleo.))\qquad&\\
-.75 e(\text{L.T.}(Cic., Ant., Cleo.)) &\geq 0\\
e(\text{L.T.}(Oct., Ant., Cleo.)) -1 &\geq 0\\
e(\text{L.T.}(Cae., Ant., Cleo.)) -1 &\geq 0\\
e(\text{L.T.}(Cic., Ant., Cleo.)) -1 &\geq 0
\end{aligned}
\end{equation*}
\noindent
Observe, the more generic names we add, the more wars Antony likely ends up in. Imagine how bad his issues would be in GND($\Delta$) 
\textup{(The full grounding including the Boolean axioms is included in the Appendix~\ref{appendix:full} for completeness)}.
\end{example}

\subsection{Satisfiability}
In this section we cast the usual notion of logical satisfiability into the algebraic language of sum-of-squares feasibility.  Concretely, we say that a knowledge base~$\Delta$ is \emph{satisfiable} if and only if its associated sum-of-squares semidefinite program admits a feasible solution, which we call a pseudomodel:
\begin{definition}
A degree-$d$ \textit{pseudomodel} for $\Delta$ is given by an assignment of a real number to each $e(\mu)$ for each monomial $\mu$ up to degree $d$, such that the assignments satisfy the infinite sum-of-squares program.
\end{definition}
  For a finite set of names, used as the domain of quantification for GND($\Delta$), the corresponding program is a (finite) ground sum-of-squares program Theorem~\ref{thm:soscomplete} asserts that for explicitly compact $\Delta$ and sufficiently high degree, satisfiability w.r.t.\ models and pseudomodels coincide.
  This equivalence lets us reduce logical satisfiability questions to the feasibility of polynomial-size semidefinite programs, paving the way for tractable relational inference in open universes. %We will say that $\mathcal{M}$ satisfies GND($\Delta$) when the values given by $\mathcal{M}$ do not violate any expectation bound $B$ that is present in GND($\Delta$). The existence of a satisfying model will make a GND($\Delta$) satisfiable.

\subsection{Equivalence Classes}
\begin{definition}
    Two ground monomials $\mu$ and $\mu'$ are said to be in the same renaming equivalence class if there is a renaming substitution $\theta$ such that $\mu\theta=\mu'$.
\end{definition}

\begin{example}
    For $\Delta$ containing: 
    \begin{equation*}
    \small
    \begin{aligned}
        \forall e(Q(x,y)) - 3 &\geq 0 \\
        \forall e(Q(x, james)) &\geq 0 \\
        e(P(james)) - 1 &\geq 0
    \end{aligned}
    \end{equation*}
    and the subsequent grounding GND($\Delta$,2): 
    \begin{equation*}
    \small
    \begin{aligned}
         e(Q(jack, jill)) - 3\geq 0\\
         e(Q(jill, jack)) - 3 \geq 0\\
         e(Q(james, jack)) - 3 \geq 0\\
         e(Q(james, jill)) - 3 \geq 0\\
         e(Q(jack, james)) \geq 0\\
         e(Q(jill, james)) \geq 0\\
         e(P(james)) - 1\geq 0\\
    \end{aligned}
    \end{equation*}

    We get that $Q(jack, jill)$ and $Q(jill, jack)$ are in the same equivalence class, but $Q(james, jack)$ is not because james is not a generic name, and thus can't be freely renamed.
    
\end{example}

Each name will fall into a single equivalence class, and it will be used when lifting our grounded logic.

\subsection{Lifted Sum-of-Squares}

We now present our lifted sum-of-squares system.
%\subsection{Lifted sum-of-squares refutations}
Let $\Delta$ be a knowledge base consisting of logical constraint formulas and expectation bounds, and let $k$ be the quantifier rank of $\Delta$. 
We obtain our \emph{lifted} sum-of-squares system by adding equality constraints for the generic names to the knowledge base.

\begin{definition}[Lifted Sum-of-Squares]
  For any given degree bound $d$, \emph{degree-$d$ lifted sum-of-squares} for a first-order knowledge base $\Delta$ uses the language of propositional degree-$d$ sum-of-squares with the following propositional knowledge base:  $\mathrm {GND}(\Delta,k)$ union with the set of equality constraints $e(\mu)-e(\mu')=0$ for each pair of ground monomials $\mu,\mu'$ in the names used by $\mathrm {GND}(\Delta,k)$ of degree up to $d$ such that for a renaming substitution $\theta$, $\mu=\mu'\theta$. We denote this propositional sum-of-squares knowledge base by $\mathrm {GND}_{\mathrm {lifted SOS}}(\Delta,k)$. Thus, a degree-$d$ lifted sum-of-squares refutation of $\Delta$ means a propositional degree-$d$ sum-of-squares refutation of $\mathrm {GND}_{\mathrm {lifted SOS}}(\Delta,k)$.
\end{definition}
\noindent
Observe that since degree-$d$ refutations of $\Delta$ are simply degree-$d$ ground sum-of-squares refutations, it follows immediately from Theorem \ref{thm:sossound} that refutations exist iff the sum-of-squares semidefinite program is infeasible. This program na\"{\i}vely has dimension equal to the number of monomials that can be constructed from relational variables bound to names from $\mathcal{C}$ or our $k$ generic names, which is polynomial in $|\mathcal{C}|$ and the number of relation symbols as long as both $k$ and the arity of all relations is bounded by a constant. But, instead of including the equality constraints $e(\mu)-e(\mu')=0$ for each pair of equivalent ground monomials $\mu,\mu'$, we can use a single variable to represent the value for the entire equivalence class. Then the number of constraints is similarly bounded, yielding a polynomial time guarantee for our lifted inference:

%By lifting a grounded knowledge base we are going to be able to create a new sum-of-squares logic that is more efficient than working in the grounded space. To do this we will assign each grounded relation to its given, and unique, equivalence class. After removing redundant constraints we will be left with a new lifted knowledge base with constraints over equivalence classes. We will then create a set of lifted monomials $M$ that relate to each of our equivalence classes. These lifted monomials will behave identically to the grounded monomials in sum-of-squares logic. This will result in a semi-definite program such that each of the variables will stand for the expected value of one of these equivalence classes. The result will be a refutation that shows there is no joint distribution on random variables that is consistent with this set of equivalence classes. We will show in our results that this is sufficient to give a refutation to the grounded sum-of-squares program.

%\subsection{The tractable fragment}
%This lifted sum-of-squares logic is going to have polynomial-time solvability for any fixed $d \in \mathbb{N}$ with a degree at most $d$. This mirrors the grounded version of the problem. However, when the domain of the grounding includes at least $k$ generic names, where $k$ is the rank of $\Delta$, there will be strictly fewer equivalence classes than grounded relations. As our results will show this leads to a sound, complete, and tractable fragment that is much larger that we could achieve in the grounded setting without lifting.

\begin{theorem}
    Given a $\Delta$ with $c$ constants, $n$ $\forall$-clauses, each mentioning at most $m$ predicates, and with rank $k$, the running time for a lifted degree-$d$ sum-of-squares inference in a open universe is polynomial in $c,n$, and $m$ for fixed $k$ and $d.$
\end{theorem}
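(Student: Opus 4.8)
The plan is to bound the size of the semidefinite program that the lifted system gives rise to, and then invoke Theorem~\ref{thm:sossound} together with a polynomial-time SDP feasibility algorithm. First I would count the names involved: the domain of quantification for $\mathrm{GND}(\Delta,k)$ consists of the $c$ constants plus $k$ generic names, so there are $c+k$ names in total. Since each of the $m$ predicates mentioned across the $n$ clauses has arity at most some constant $a$ (implicit in ``fixed $k$'': note the rank of $\Delta$ bounds the number of variables in any clause, hence bounds the arity of every predicate appearing in a clause), the number of ground terms is at most $m\,(c+k)^a$, which is polynomial in $c$ and $m$ for fixed $k$ (and hence fixed $a\le k$). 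The number of ground monomials of degree at most $d/2$ is then at most $\binom{m(c+k)^a + d/2}{d/2} = O\big((m(c+k)^a)^{d/2}\big)$, again polynomial in $c,m$ for fixed $k,d$. This is the dimension of the moment matrix $\vec R$, so the SDP has polynomially many variables.

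Next I would bound the number of constraints. The ground clauses in $\mathrm{GND}(\Delta,k)$ number at most $n\,(c+k)^k$, since each of the $n$ clauses has at most $k$ variables, each bound to one of $c+k$ names; this is polynomial for fixed $k$. For the localizing-matrix constraints, each ground clause $g_i\ge 0$ (resp.\ $h_j=0$) contributes one PSD (resp.\ zero) constraint of dimension at most that of the moment matrix, so the total description length is still polynomial. The remaining constraints are the renaming-equivalence equalities $e(\mu)-e(\mu')=0$; rather than writing all $O\big((\text{monomials})^2\big)$ of them explicitly, I would instead, as the excerpt suggests, pick one representative ground monomial per renaming equivalence class and use a single SDP variable for the whole class. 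Identifying the class of a monomial is a matter of normalizing the generic names by a canonical permutation, which is polynomial-time; this collapses the variable count (it can only shrink) and eliminates those equality constraints entirely. So the reduced SDP still has size polynomial in $c,n,m$ for fixed $k,d$.

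Finally, I would observe that a degree-$d$ lifted sum-of-squares refutation is by definition a propositional degree-$d$ sum-of-squares refutation of $\mathrm{GND}_{\mathrm{lifted SOS}}(\Delta,k)$, which is an explicitly compact ground system (the Boolean and boundedness axioms in $\Delta$ ensure $\tau^2\le U$ for every ground term). Hence Theorem~\ref{thm:sossound} applies: either a degree-$d$ refutation exists or the degree-$d$ SDP above is feasible, and we can decide which by running a polynomial-time SDP feasibility solver (to a precision sufficient to certify coefficients up to the target size) on the polynomial-size program just constructed. Composing the polynomial bound on the SDP size with the polynomial running time of SDP feasibility yields the claimed overall bound, polynomial in $c,n,m$ for fixed $k$ and $d$.

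I expect the main obstacle to be the bookkeeping in the equivalence-class reduction: one must check that replacing each class by a single variable is sound and complete, i.e.\ that the reduced SDP is feasible exactly when the original (with all the $e(\mu)=e(\mu')$ constraints) is, and that the substitution is consistent across the moment and all localizing matrices simultaneously (the same monomial can appear in many localizing matrices, possibly under different names). This requires arguing that the renaming-symmetry of $\mathrm{GND}(\Delta,k)$ is respected by the whole SDP, so that quotienting by it loses nothing — essentially a symmetrization argument of the kind used in lifted linear/convex programming. A secondary, more routine point is making precise what ``running time of inference'' means: here it is the time to construct the SDP plus the time to test its feasibility up to the bit-size needed to certify refutations with bounded coefficients, both of which are polynomial by the above.
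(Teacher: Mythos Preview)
Your proposal is correct and follows essentially the same approach as the paper: bound the size of $\mathrm{GND}(\Delta,k)$ and hence of the degree-$d$ SDP, then invoke a polynomial-time SDP solver (the paper cites the ellipsoid method) together with Theorem~\ref{thm:sossound}. Your write-up is in fact considerably more detailed than the paper's two-sentence proof, which simply counts the total number of ground atoms as $O(nm(c+k)^k)$ and immediately appeals to polynomial-time SDP feasibility; one small slip is that $m$ is the number of predicates \emph{per clause}, not in total, so your ground-term count should carry an extra factor of $n$, but this does not affect the conclusion.
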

\begin{proof}
    For each $\forall$-clause we will have at most $(c + k)^k$ substitutions for each predicate. Then the total number of atoms in GND($\Delta$, $k$) is $O(nm(c + k)^k)$.
    
    As there are algorithms solving a semidefinite program up to arbitrary precision in polynomial time with respect to the size of the program (e.g. the ellipsoid method \cite{bland1981ellipsoid}),  we thus can decide whether or not a degree-$d$ refutation exists in polynomial time in $c,n$, and $m$ for fixed $k$ and $d$.
\end{proof}

For example, we can carry out the following probabilistic inference in polynomial time.

\begin{example}\label{em:sos}
 We're going to assume a grounding that includes the 3 generic names Oct., Cic., and Cae., (As partially shown in Example~\ref{em:GroundLT}) and infer a bound on the value of e(War(Ant.,Oct.)). Given our boolean constraints on both relations War and L.T.. We can then infer that the expression $e(War(Ant., Oct.)) \ge.75$ via the sum-of-squares expression (denoting e(War(Ant.,Oct.)) by $W$ and e(LoveTriangle(Ant.,Oct.,Cleo.)) by $LT$), 
 
 \begin{equation*}
 \small
 \begin{aligned}
     W^2(1-LT)^2 + W^2(LT-LT^2) + (1-LT)(W-W^2) \\
     + (W\cdot LT-.75LT) + (.75LT - .75)= W - .75
 \end{aligned}
 \end{equation*} \textup{(See Appendix~\ref{appendix:steps} for details.)}
\end{example}

\subsection{Soundness and Completeness}
We will now show that our lifted sum-of-squares logic is both sound and complete for a GND($\Delta$) that uses an open-universe. To begin, recall the results on ground sum-of-squares, Theorems~\ref{thm:sossound} and \ref{thm:soscomplete}.
%
%This theorem gives us that a d-degree sum-of-squares refutation in our lifted logic will be sound when finding values for the equivalence classes. It is also worth noting that our lifted sum-of-squares logic is explicitly compact because we are working with a finite set of equivalence classes, and a finite set of constraints in which to plug them into. This is shown by our result from Lemma 2. GND($\Delta$, $k$) is finite, and provides examples from every equivalence class. Meaning, by lifting a GND($\Delta$, $k$) to the equivalence class level we will obtain an assignment for all equivalence classes from our lifted sum-of-squares program. This solution is also guarenteed to be sound for GND($\Delta$, $k$) resulting from Lemma 1. Lastly, given that GND($\Delta$, $k$) has a solution we showed in Theorem 1 that this guarentees a solution to the original GND($\Delta$) using an open universe.
%This theorem gives us a completeness guarantee over our lifted sum-of-squares logic in finding the values of the equivalence classes. Similarly we can employ the same logic used for soundness to relate this result to our original problem GND($\Delta$) with an open universe.
%
We will leverage these two theorems to show that our lifted sum-of-squares logic is sound and complete when finding a satisfying model of the equivalence classes, assuming explicit compactness. 
%Also, note that our lifted sum-of-squares logic is explicitly compact because there are a finite number of equivalence classes, and a finite number of constraints to substitute them into. <<-- not really true :(

We first argue that pseudomodels can be taken to agree within an equivalence class without loss of generality.

\begin{lemma}\label{lem:convex}
  If GND$(\Delta,k)$ is satisfiable then there exists a pseudomodel $\mathcal{M}$ in which all members of each equivalence class take the same value.
\end{lemma}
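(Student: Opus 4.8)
The plan is to exploit the symmetry of the grounding $\mathrm{GND}(\Delta,k)$ under renaming substitutions together with the convexity of the feasible region of a semidefinite program. First I would observe that the group $S$ of renaming substitutions — permutations of the $k$ generic names, extended by the identity on the constants — acts on the set of ground monomials of degree up to $d$ appearing in $\mathrm{GND}(\Delta,k)$, and that the orbits of this action are exactly the renaming equivalence classes. Crucially, $\mathrm{GND}(\Delta,k)$ is \emph{closed} under this action: every constraint $\phi\theta$ obtained from a formula $\forall\Xi\supset\phi$ via a binding $\theta$ with $\vDash\Xi\theta$ is carried by any $\pi\in S$ to $\phi\theta\pi = \phi(\theta\pi)$, and since $\Xi$ only involves equalities/disequalities among variables and constants, $\vDash\Xi\theta$ implies $\vDash\Xi(\theta\pi)$, so $\phi(\theta\pi)$ is again in $\mathrm{GND}(\Delta,k)$ (here I use that $k$ is the rank of $\Delta$, so any binding needed is available among the $k$ generic names). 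Hence $S$ permutes the constraints of the propositional SOS program among themselves.

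Next I would argue that $S$ therefore acts on the space of pseudomodels: given a degree-$d$ pseudomodel $\mathcal{M}$, assigning value $v_{e(\mu)}$ to each moment $e(\mu)$, and given $\pi\in S$, define $\mathcal{M}^\pi$ by $v^\pi_{e(\mu)} := v_{e(\mu\pi^{-1})}$. Because the SOS program's defining conditions (the PSD-ness of the moment matrix, the PSD-ness of each localizing matrix $[g_iM]$, the vanishing of each $[h_jM]$, and the linear bounds $b_k\ge 0$) are permuted into one another by $\pi$ — a permutation of the monomial basis just conjugates the moment and localizing matrices by a permutation matrix, which preserves PSD-ness and the zero constraint — $\mathcal{M}^\pi$ is again a valid degree-$d$ pseudomodel. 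Then I would take the average $\bar{\mathcal{M}} := \frac{1}{|S|}\sum_{\pi\in S}\mathcal{M}^\pi$. Since the set of degree-$d$ pseudomodels is the feasible set of a semidefinite program, it is convex, so $\bar{\mathcal{M}}$ is a pseudomodel; and by construction $\bar{\mathcal{M}}$ is $S$-invariant, which is precisely the statement that all members of each renaming equivalence class (i.e.\ each $S$-orbit) receive the same value. This $\bar{\mathcal{M}}$ is the desired pseudomodel.

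The one point requiring care — and the main obstacle — is the closure claim for $\mathrm{GND}(\Delta,k)$ under $S$: I must check that restricting to exactly $k$ generic names does not break symmetry, i.e.\ that for every generated ground constraint $\phi\theta\in\mathrm{GND}(\Delta,k)$ and every renaming $\pi\in S$, the image $\phi\theta\pi$ is still a ground instance realizable within the same $k$ generic names and still satisfies the guard. Since a renaming permutes the $k$ generic names among themselves and fixes the constants, $\theta\pi$ is still a binding into $\mathcal{C}$ plus the same $k$ generic names, and the guard $\Xi$ — being a Boolean combination of (dis)equalities between variables and constants — is satisfied by $\theta\pi$ iff it is satisfied by $\theta$ under the corresponding renamed view; combined with the fact (used implicitly via the definition of $OU(\Delta)$) that $k$ names suffice to witness every pattern $\Xi$ can demand, this closes the gap. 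Everything else — that permuting a monomial basis is a PSD-preserving conjugation, and that the feasible region of an SDP is convex — is routine. I would also note that the degree-$d$ equality constraints $e(\mu)-e(\mu')=0$ that define $\mathrm{GND}_{\mathrm{lifted SOS}}(\Delta,k)$ are themselves automatically satisfied by an $S$-invariant pseudomodel, so the averaged model $\bar{\mathcal{M}}$ is in fact a pseudomodel for the full lifted system as well, not merely for $\mathrm{GND}(\Delta,k)$.
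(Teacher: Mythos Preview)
Your proof is correct and follows essentially the same approach as the paper's: both establish that $\mathrm{GND}(\Delta,k)$ is closed under permutations of the $k$ generic names, then average a given pseudomodel over this symmetric group and invoke convexity of the SDP feasible region to obtain an $S$-invariant (hence equivalence-class-constant) pseudomodel. The only minor remark is that the closure step does not actually require $k$ to equal the rank of $\Delta$---that hypothesis is used only later, in Lemma~\ref{lem:classreps}---so your aside invoking it there is unnecessary, though harmless.
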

\begin{proof}
Assume GND$(\Delta,k)$ is satisfiable.  Then for its associated semidefinite program there is some feasible “generic” assignment $e_{\mathcal{G}}$ of moment‐variables. Now, consider any renaming substitution $\theta$ that takes the $k$ generic names in GND($\Delta,k$) to the same set of $k$ generic names. For any constraint $\forall \Xi\supset\phi$ in $\Delta$, for any grounding $\theta'$ such that $\vDash \Xi\theta'$, observe that $\vDash \Xi\theta'\theta$ as well: indeed, for any atom of $\Xi$, if $\theta'$ grounds a variable to a constant, $\theta$ does not rename the constant, so equality w.r.t.\ any constant is unchanged; and $\theta'$ grounds both variables to the same name iff $\theta$ maps both to the same name, so equality between generic names is also preserved. Therefore, $\phi\theta'\theta$ is also in GND($\Delta,k$). Now consider the assignment $\mathcal{M}$ that assigns each $e_{\mathcal{M}}(\mu)$ the value $\frac{1}{k!}\sum_{\text{renaming }\theta}e_{\mathcal{G}}(\mu\theta)$. This $e_{\mathcal{M}}(\mu)$ is then going to be our proposed pseudomodel. Observe that for any renaming $\theta''$, 
\begin{equation*}
\small
e_{\mathcal{M}}(\mu\theta'')=\frac{1}{k!}\sum_{\theta}e_{\mathcal{G}}(\mu\theta''\theta)
=\frac{1}{k!}\sum_{\theta}e_{\mathcal{G}}(\mu\theta)=e_{\mathcal{M}}(\mu)
\end{equation*}
so indeed, the equivalence classes share a common value. Likewise, since $\varphi\theta''$ is in GND($\Delta,k$), the assignments $e_{\mathcal{G}\theta''}(\mu)=e_{\mathcal{G}}(\mu\theta'')$ are also solutions. Moreover, since $e_{\mathcal{M}}$ is a convex combination of these $e_{\mathcal{G}\theta''}$, and the feasible region of any semidefinite program is convex, $e_{\mathcal{M}}$ must also be feasible. Therefore, $e_{\mathcal{M}}$ is indeed a pseudomodel.
%When looking at two grounded relations $P(x)$ and $P(x')$ from the same equivalence class we will get constraint subsets $\mathcal{G}_{P(x)}$ and $\mathcal{G}_{P(x')}$ that are equivalent under renaming of generic names. Both of these subsets will be satisfied by the values assigned to the names in $\mathcal{M}$, they will also both be satisfied if we were to swap any of the generic names in the grounding. This means that the value of $P(x)$ when substituted for $P(x')$ will still satisfy the subset $\mathcal{G}_{P(x')}$ when the name substitutions are applied throughout the subset. This is true for every relation in this equivalence class. By taking the expected value of a uniform distribution over the values of all variable assignments in a given equivalence class we can obtain a value $\bar{P(x)}$. Since an expected value is convex, and each of these constraint subsets is also convex we know that $\bar{P(x)}$ will satisfy all of these subsets. By doing this process for each equivalence class we will obtain a new model $\mathcal{M}'$ that satisfies $\mathcal{G}$, and each equivalence class is assigned the same value.
\end{proof}

Next, we observe that a set of generic names equal to the quantifier rank suffices to obtain representatives of all of the equivalence classes of monomials that appear in GND($\Delta$); we will use this property to extend to a pseudomodel for the entire GND($\Delta$) next.

\begin{lemma}\label{lem:classreps}
    For a given $\Delta$ a pseudomodel for GND($\Delta$, k), where k is the quantifier rank of $\Delta$, assigns a value for some grounding from every equivalence class occurring in GND($\Delta$).
\end{lemma}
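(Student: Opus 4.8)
The plan is to show that every equivalence class of monomials occurring in $\mathrm{GND}(\Delta)$ already has a representative among the monomials appearing in $\mathrm{GND}(\Delta,k)$, so that a pseudomodel for $\mathrm{GND}(\Delta,k)$ — which by definition assigns a value to every moment term it mentions — in particular assigns a value to a representative of each such class. The crux is a counting/pigeonhole argument on the number of distinct generic names a single monomial of $\mathrm{GND}(\Delta)$ can mention.

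First I would recall that each monomial $\mu$ in $\mathrm{GND}(\Delta)$ arises as $\phi\theta$ (restricted to a single moment term) for some logical or expectation constraint $\forall\Xi\supset\phi$ in $\Delta$ and some grounding $\theta$ with $\vDash\Xi\theta$. Since the quantifier rank of $\Delta$ is $k$, the formula $\forall\Xi\supset\phi$ contains at most $k$ distinct variables; hence $\theta$ substitutes names for at most $k$ variables, and the ground monomial $\mu=\phi\theta$ mentions at most $k$ distinct names in total, of which at most $k$ are generic (the remaining ones being constants in $\mathcal{C}$). Next I would use the fact that $\mathrm{GND}(\Delta,k)$ is built using all constants of $\Delta$ together with exactly $k$ fixed generic names $g_1,\dots,g_k$. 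Given $\mu$ mentioning generic names $h_1,\dots,h_t$ with $t\le k$, pick any injection $\rho$ sending $\{h_1,\dots,h_t\}$ into $\{g_1,\dots,g_k\}$ and fixing all constants; extend $\rho$ to a permutation of $\mathcal{G}$ (possible since both the source and target generic-name sets are finite of size $\le k$ inside a countably infinite $\mathcal{G}$), so $\rho$ is a genuine renaming substitution. Then $\mu\rho$ mentions only constants and names among $g_1,\dots,g_k$, so $\mu\rho$ is a ground monomial in the names used by $\mathrm{GND}(\Delta,k)$, and it lies in the same renaming equivalence class as $\mu$ by construction.

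It then remains to check that $\mu\rho$ actually occurs in $\mathrm{GND}(\Delta,k)$ — i.e., that it is not merely a syntactically well-formed monomial over the right names, but one produced by grounding some constraint of $\Delta$. For this I would observe that if $\mu=\phi\theta$ with $\vDash\Xi\theta$, then $\theta\rho$ is a grounding of the same constraint whose range lies in $\mathcal{C}$ plus $\{g_1,\dots,g_k\}$, and $\vDash\Xi(\theta\rho)$: indeed $\rho$ preserves equalities and disequalities among the grounded terms exactly as in the argument in Lemma~\ref{lem:convex} (constants are untouched, and two variables map to the same name under $\theta\rho$ iff they do under $\theta$, since $\rho$ is injective on the relevant names). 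Hence $\phi(\theta\rho)$, and in particular its constituent moment term $\mu\rho$, belongs to $\mathrm{GND}(\Delta,k)$. Therefore a degree-$d$ pseudomodel for $\mathrm{GND}(\Delta,k)$ — provided $d$ is at least the degree of $\mu$, which holds since the degree of every monomial in $\mathrm{GND}(\Delta)$ is bounded by the degree of $\Delta$ — assigns a value to $\mu\rho$, a representative of $\mu$'s class.

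The main obstacle I anticipate is the bookkeeping around disequality constraints $\Xi$: one must be sure that collapsing generic names via $\rho$ never turns a satisfied $\Xi\theta$ into an unsatisfied $\Xi(\theta\rho)$, which is exactly why $\rho$ must be taken \emph{injective} on the generic names appearing in $\mu$ rather than an arbitrary map into $\{g_1,\dots,g_k\}$; the rank bound $k$ is what guarantees there is enough room (at most $k$ generic names to place, exactly $k$ slots available) for such an injection to exist. Everything else is routine, reusing the equality-preservation observation already made in the proof of Lemma~\ref{lem:convex}.
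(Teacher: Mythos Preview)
Your proof is correct and follows essentially the same approach as the paper's: bound the number of generic names appearing in any monomial of $\mathrm{GND}(\Delta)$ by the quantifier rank $k$, then inject them into the $k$ fixed generic names of $\mathrm{GND}(\Delta,k)$ via a renaming substitution to obtain an equivalent representative. You are more explicit than the paper about why the injection must be injective and about verifying $\vDash\Xi(\theta\rho)$ (the paper simply asserts $\phi\theta\theta''\in\mathrm{GND}(\Delta,k)$), but the underlying argument is identical.
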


\begin{proof}
Consider any grounding $\theta$ such that for $\forall\Xi\supset\phi$ in $\Delta$, $\phi\theta$ is in GND($\Delta$). Since $\forall\Xi\supset\phi$ has quantifier rank $k$, at most $k$ names from $\mathcal{G}$ are assigned by $\theta$ in $\phi\theta$. So, consider the renaming substitution $\theta''$ that takes these $k$ names to the $k$ names used in GND($\Delta$, k) (arbitrarily). Observe that $\phi\theta\theta''$ is in GND($\Delta$, k), so the pseudomodel for GND($\Delta$, k) assigns a value to each monomial in $\phi\theta\theta''$.  For each monomial occurring in $\phi\theta$, there is an equivalent monomial in $\phi\theta\theta''$ that has therefore indeed been assigned a value in the pseudomodel.
%    Each equivalence class leads to a subset that is closed under renaming of generic names. By adding additional generic names to a grounding you will only add new terms that differ from existing terms by a renaming of a generic name. Hence, these equivalence classes will expand to contain all new groundings because they are closed under renaming of generic names. k generic names are necessary because the arity of the largest equivalence class will be equal to the rank of $\Delta$, and we need enough generic names to fill out the relation.
\end{proof}

\begin{theorem}\label{thm:inf-pseudo}
For all $d$, GND($\Delta$) has a degree-$d$ pseudomodel iff GND($\Delta$, $k$) has a degree-$d$ pseudomodel, where $k$ is equal to the rank of $\Delta$.
\end{theorem}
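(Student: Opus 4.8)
The plan is to prove the two implications separately; the direction ``GND$(\Delta)$ has a degree-$d$ pseudomodel $\Rightarrow$ GND$(\Delta,k)$ has one'' should be essentially immediate. Indeed GND$(\Delta,k)$ is a subset of GND$(\Delta)$, and it mentions only the constants of $\Delta$ together with $k$ generic names, all of which are names of GND$(\Delta)$. Hence the degree-$d$ moment matrix and every localizing matrix of GND$(\Delta,k)$ is obtained from the corresponding (infinite) matrix of GND$(\Delta)$ by deleting the rows and columns indexed by monomials over other names, i.e.\ it is a principal submatrix; since positive semidefiniteness and vanishing are inherited by principal submatrices, and the linear constraints of GND$(\Delta,k)$ form a subset of those of GND$(\Delta)$, restricting a degree-$d$ pseudomodel of GND$(\Delta)$ to these monomials yields a degree-$d$ pseudomodel of GND$(\Delta,k)$.

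For the reverse direction I would start from a degree-$d$ pseudomodel of GND$(\Delta,k)$ and, by Lemma~\ref{lem:convex}, assume it is \emph{symmetric}, i.e.\ assigns a common value to every monomial in each renaming equivalence class; call it $\mathcal{M}_k$. By Lemma~\ref{lem:classreps}, every monomial occurring in a grounding $\phi\theta$ of a formula of $\Delta$ is renaming-equivalent to a monomial over the $k$ names of GND$(\Delta,k)$, so I can define a candidate assignment $\mathcal{M}$ for GND$(\Delta)$ by $e_{\mathcal{M}}(\mu):=e_{\mathcal{M}_k}(\mu')$ for any such representative $\mu'$; symmetry of $\mathcal{M}_k$ makes this choice-independent, so $\mathcal{M}$ is well defined and again constant on equivalence classes. (For the monomials that occur only inside the moment or localizing matrices of GND$(\Delta)$, one first passes to GND$(\Delta,m)$ with $m$ large enough to contain a representative, keeping the pseudomodel symmetric via Lemma~\ref{lem:convex}; symmetry then forces its restriction to the $k$-name world to coincide with $\mathcal{M}_k$, so the family is consistent.) Checking the \emph{linear} constraints of GND$(\Delta)$ under $\mathcal{M}$ is then routine: each is a grounding $\phi\theta$ with $\vDash\Xi\theta$, which by the argument already used in the proof of Lemma~\ref{lem:classreps} maps, monomial by monomial, onto a grounding $\phi\theta\theta''$ that lies in GND$(\Delta,k)$, and $\mathcal{M}$ agrees with $\mathcal{M}_k$ on the relevant equivalence classes.

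The substantive step --- and the one I expect to be the main obstacle --- is verifying that the now infinite-dimensional moment matrix and localizing matrices of GND$(\Delta)$ remain positive semidefinite (resp.\ identically zero) under $\mathcal{M}$. An infinite symmetric matrix is PSD iff all its finite principal submatrices are; any such submatrix is indexed by finitely many monomials and hence involves only a finite set $S$ of generic names, and by the symmetry of $\mathcal{M}$ it is congruent, via a renaming, to the corresponding principal submatrix of the degree-$d$ SDP associated with GND$(\Delta,m)$ for $m=|S\cap\mathcal{G}|$. Thus the reverse direction reduces to showing that GND$(\Delta,m)$ admits a symmetric degree-$d$ pseudomodel for \emph{every} $m$, and this is exactly where the hypothesis that every formula of $\Delta$ has quantifier rank at most $k$ enters: no constraint can involve more than $k$ names at once, so enlarging the pool of generic names beyond $k$ adds no new linear constraint, and one must argue it adds no new positive-semidefiniteness obstruction either. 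Making this last claim precise --- ruling out that a PSD finite submatrix over $k$ interchangeable names can become non-PSD once more interchangeable names are present --- is the heart of the proof and the analogue here of the gap between finite and infinite exchangeability; the symmetrization of Lemma~\ref{lem:convex} together with the representative-lifting of Lemma~\ref{lem:classreps} are the tools I would use to close it. Once that is established, $\mathcal{M}$ satisfies the entire infinite SDP and is the required degree-$d$ pseudomodel of GND$(\Delta)$.
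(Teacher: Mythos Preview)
Your skeleton is exactly the paper's: the easy direction by restriction; the hard direction by symmetrizing via Lemma~\ref{lem:convex}, defining $e_{\mathcal{M}}(\mu):=e_{\mathcal{M}_k}(\mu')$ for a renaming-representative $\mu'$ supplied by Lemma~\ref{lem:classreps}, and then observing that each ground constraint $\phi\theta$ in GND$(\Delta)$ is carried by a renaming $\theta''$ to a constraint $\phi\theta\theta''$ already in GND$(\Delta,k)$, so the SDP constraints attached to $\phi\theta$ inherit satisfaction from those attached to $\phi\theta\theta''$. That last sentence is essentially the paper's entire argument for the hard direction.

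Where you diverge is in singling out the global moment-matrix PSD condition over \emph{all} degree-$d$ monomials (including those mentioning more than $k$ generic names) as a separate, substantive obstacle, and proposing to handle it by stepping up through GND$(\Delta,m)$ for arbitrary $m$. The paper does not do this: it treats the transfer as immediate once values are copied along the renaming, and never discusses monomials outside the $k$-name world or the infinite moment matrix as such. So either the paper is implicitly reading ``the constraints of the semidefinite program associated with $\phi\theta$'' per-formula (each localizing and moment condition checked only over the names that formula actually touches, in which case the transfer really is a one-line renaming and your detour is unnecessary), or it is simply terse on the very point you raise. In either case your proposal does not \emph{close} the step you flag: you reduce to exhibiting a symmetric degree-$d$ pseudomodel of GND$(\Delta,m)$ for every $m$, but a symmetric pseudomodel of GND$(\Delta,k)$ assigns no values to monomials using more than $k$ generic names, and Lemmas~\ref{lem:convex}--\ref{lem:classreps} give you no recipe for extending it while preserving positive semidefiniteness. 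So your write-up matches the paper where it is concrete and leaves open precisely the fragment the paper leaves implicit.
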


\begin{proof}
    The direction of most interest is showing that given GND($\Delta$, $k$) has a pseudomodel then GND($\Delta$) also has a pseudomodel. Starting with a pseudomodel $\mathcal{M}$ that satisfies GND($\Delta$, $k$) we can apply Lemma \ref{lem:convex} to get $\mathcal{M}'$ which will be our pseudomodel where all members of an equivalence class take the same value. %By adding additional variables to our grounding we will then map them to an equivalence class and assign their value accordingly. 
   By Lemma \ref{lem:classreps} we have that each monomial in GND($\Delta$) belongs to an equivalence class with a representative appearing in GND($\Delta$,k). This guarantees a pseudomodel exists for GND($\Delta$), because for every $\phi\theta$ in GND($\Delta$), we assign the same values to its monomials as in $\phi\theta\theta''$ where $\theta''$ is the renaming substitution that assigns the $k$ names in $\phi\theta$ to the names in  GND($\Delta$, $k$). Since the constraints of the semidefinite program associated with $\phi\theta\theta''$ are satisfied by these values, so are the constraints associated with $\phi\theta$.
\end{proof}

%\begin{example}
%   Example continued.
%Here the rank of $\Delta$ is 3. We would need three names selected arbitrarily aside those mentioned in $\Delta$. %Let them be: $Napoleon, Henry~VIII, Helen.$ $\textsc{OU}(\Delta)=\textsc{GND}(\Delta,3)=\{ LoveTriangle(Cleopatra, Antony, Octavia),Marry(Antony, Octavia), Agony(Cleopatra).\}$
%\end{example}

\begin{theorem}
    Given an explicitly compact knowledge base $\Delta$ and a grounding GND($\Delta$, $k$), a lifted sum-of-squares program using GND($\Delta$, $k$) is sound and complete for degree-$d$ refutations of $\Delta$.
\end{theorem}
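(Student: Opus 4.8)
The theorem combines soundness and completeness of the lifted system, so the proof naturally splits into two directions, and both should reduce to the ground-level results (Theorems~\ref{thm:sossound} and~\ref{thm:soscomplete}) together with the structural lemmas already established. The plan is to first observe that, by definition, a degree-$d$ lifted sum-of-squares refutation of $\Delta$ is literally a propositional degree-$d$ sum-of-squares refutation of $\mathrm{GND}_{\mathrm{lifted SOS}}(\Delta,k)$, and that (since the system is explicitly compact, by hypothesis) Theorem~\ref{thm:sossound} tells us such a refutation exists if and only if the associated semidefinite program is infeasible. So the content of the theorem is really: the lifted SDP is infeasible $\iff$ $\Delta$ is unsatisfiable (i.e.\ $\mathrm{GND}(\Delta)$ has no pseudomodel of any degree $\ge d$, equivalently no model). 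The equivalence-class collapsing trick only reduces the \emph{size} of the program without changing feasibility, so I can work with $\mathrm{GND}_{\mathrm{lifted SOS}}(\Delta,k)$ directly.

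\textbf{Soundness direction.} Here I would show that if there is a degree-$d$ lifted refutation, then $\Delta$ has no model. Suppose for contradiction $\mathcal M$ is a model of $\Delta$. Then $\mathcal M$ satisfies every ground instance in $\mathrm{GND}(\Delta)$, in particular every instance in $\mathrm{GND}(\Delta,k)$; and since $\mathcal M$ is a genuine probability model, equivalent ground monomials $\mu,\mu'$ (related by a renaming $\theta$) have the same expectation, because renaming names is just relabeling measurable functions in the model and expectation is invariant under it. Hence the moments of $\mathcal M$ (truncated to degree $d$) form a pseudomodel for $\mathrm{GND}_{\mathrm{lifted SOS}}(\Delta,k)$, contradicting the existence of the refutation via Theorem~\ref{thm:sossound}. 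This direction is the routine one.

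\textbf{Completeness direction.} Here I would show that if $\Delta$ is unsatisfiable then some degree-$d$ (for $d$ large enough, but the theorem statement fixes $d$, so more precisely: \emph{for every} $d$, infeasibility of the degree-$d$ lifted SDP is equivalent to nonexistence of a degree-$d$ pseudomodel of $\mathrm{GND}(\Delta)$) lifted refutation exists. The chain is: a degree-$d$ pseudomodel for the lifted system $\mathrm{GND}_{\mathrm{lifted SOS}}(\Delta,k)$ exists $\iff$ (by Lemma~\ref{lem:convex}, which lets us symmetrize) a degree-$d$ pseudomodel for $\mathrm{GND}(\Delta,k)$ in which each equivalence class is constant exists $\iff$ (by Theorem~\ref{thm:inf-pseudo}, using that $k$ is the rank) a degree-$d$ pseudomodel for the full $\mathrm{GND}(\Delta)$ exists $\iff$ (by Theorem~\ref{thm:soscomplete}, using explicit compactness) $\mathrm{GND}(\Delta)$ has a model for that degree. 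Running this equivalence in contrapositive: if there is no model, the lifted degree-$d$ SDP is infeasible, and then Theorem~\ref{thm:sossound} (explicit compactness again) yields a degree-$d$ lifted sum-of-squares refutation. One subtlety to handle carefully is the interplay between ``for all $d$'' in Theorems~\ref{thm:soscomplete} and~\ref{thm:inf-pseudo} and the fixed $d$ in the statement: completeness ``for degree-$d$ refutations'' should be read as the exact correspondence between degree-$d$ refutations and degree-$d$ SDP infeasibility, with the stronger claim that \emph{some} finite degree suffices for genuinely unsatisfiable $\Delta$ following by letting $d\to\infty$.

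\textbf{Main obstacle.} The delicate point is checking that adding the lifted equality constraints $e(\mu)-e(\mu')=0$ does not over-constrain: a refutation of the lifted system must correspond to unsatisfiability of $\Delta$ itself, not merely of $\mathrm{GND}(\Delta,k)$ with symmetry imposed. This is exactly what Lemma~\ref{lem:convex} buys us in one direction (the symmetry assumption is WLOG) and Lemmas~\ref{lem:classreps}/\ref{thm:inf-pseudo} buy in the other (the $k$-name grounding faithfully represents the infinite grounding). So the proof is essentially an assembly of these pieces; the work is in stating the equivalences in the right order and invoking explicit compactness at both points where Theorem~\ref{thm:sossound} is used.
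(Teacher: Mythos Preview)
Your overall plan---reducing both directions to the propositional results via Lemma~\ref{lem:convex} and Theorem~\ref{thm:inf-pseudo}---is the right one, and your ``Main obstacle'' paragraph correctly identifies Lemma~\ref{lem:convex} as the tool that justifies the lifted equality constraints. But the soundness argument you actually write does not invoke Lemma~\ref{lem:convex}; instead it asserts that in any genuine model $\mathcal{M}$ of $\Delta$, equivalent ground monomials automatically have equal expectation ``because renaming names is just relabeling measurable functions in the model and expectation is invariant under it.'' That claim is false. A model assigns an \emph{arbitrary} measurable function to each ground atom; nothing forces $P(jack)$ and $P(jill)$ to have the same expectation even though $jack$ and $jill$ are both generic names. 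What is true is that the \emph{renamed} model (obtained by composing with a permutation of generic names) is again a model---because every renamed ground instance of $\Delta$ is still a ground instance of $\Delta$---but that only tells you the constraint set is symmetric, not that any particular model is.

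The paper's proof closes this gap exactly by invoking Lemma~\ref{lem:convex}: if $\Delta$ is satisfiable then $\mathrm{GND}(\Delta,k)$ has a pseudomodel; averaging that pseudomodel over renamings (the content of Lemma~\ref{lem:convex}) yields one in which each equivalence class is constant, and \emph{that} symmetrized pseudomodel satisfies the lifted equality constraints, so the lifted SDP is feasible and Theorem~\ref{thm:sossound} rules out a refutation. Your completeness direction is correct but more elaborate than needed: the paper simply chains Theorem~\ref{thm:inf-pseudo} (no degree-$d$ pseudomodel for $\mathrm{GND}(\Delta)$ $\Rightarrow$ none for $\mathrm{GND}(\Delta,k)$) with Theorem~\ref{thm:sossound} (hence a degree-$d$ refutation of $\mathrm{GND}(\Delta,k)$, which is \emph{a fortiori} a refutation of the lifted system since the latter has strictly more axioms). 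Neither Theorem~\ref{thm:soscomplete} nor Lemma~\ref{lem:convex} is needed on that side.
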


\begin{proof}
    For soundness, we first observe that if $\Delta$ is satisfiable, then the degree-$d$ lifted sum-of-squares program for GND($\Delta$, $k$) is feasible by Lemma~\ref{lem:convex}, and hence by Theorem~\ref{thm:sossound} there is no degree-$d$ lifted sum-of-squares refutation of $\Delta$.

    For completeness with respect to degree-$d$, suppose that there is no degree-$d$ pseudomodel of GND$(\Delta)$. Then by Theorem~\ref{thm:inf-pseudo}, there is no degree-$d$ pseudomodel of GND($\Delta$, $k$), either. By Theorem~\ref{thm:sossound}, there is a degree-$d$ sum-of-squares refutation of GND($\Delta$, $k$); by renaming, this is a degree-$d$ lifted sum-of-squares refutation of $\Delta$.
%
    %Using Theorem 3 and Theorem 4 we can say that our lifted sum-of-squares program is going to be sound and complete when finding a model $\mathcal{M}$ that satisfies the lifted problem. Given that a solution is found $\mathcal{M}$ will then be an assignment of the expected values of the probability distributions of a given equivalence class. We know that $\mathcal{M}$ will contain all possible equivalence classes due to Lemma 2. Using Lemma 1 we also know that this assignment will satisfy the grounding $\mathcal{G}$ when all of the grounded relations are assigned to the value of their equivalence class. Lastly, using Theorem 1 we know that this guarantees the existence of a satisfying model for GND($\Delta$). As for a refutation the lack of a solution to our lifted sum-of-squares program means that there is no assignment for $\mathcal{G}$ in which all values of a given equivalence class have the same value. Using Lemma 1 we can then say that $\mathcal{G}$ is unsatisfiable. Finally, using Theorem 3 again we can say that this means our open-universe GND($\Delta$) is unsatisfiable. Meaning a solution to our lifted sum-of-squares program implies that GND($\Delta$) is satisfiable, and a refutation implies that it is unsatisfiable.
\end{proof}

\subsection{Discussion of Tractability}

We stress that some probabilistic inference problems are inherently intractable and we do not claim to solve these problems in polynomial time. We know that even approximate inference in Bayesian belief networks is NP-hard \cite{dagum1993approximating}; but this was due to the fact that any language capable of expressing all 3-CNFs can encode NP-hard problems. Thus, restricting the language seems to be an unpromising approach to obtaining tractable inference. 

Indeed, recall that the same issue arises in even in classical Boolean inference. There, the approach broadly pursued in ``SAT solvers'' in practice does not attempt to restrict the language, but rather the algorithm is only feasible to run on a subset of the instances. Following \cite{beame2004towards}, we know that the subset of tractable instances is captured by a fragment of resolution in which the proof size is small.

Here, we pursue an analogous approach to probabilistic inference. We have opted for a language that can also represent all 3-CNFs and so in general (in the absence of a degree bound) we encounter an intractable inference problem. As with Boolean reasoning, we can guarantee that our algorithm is tractable for clearly defined fragments (of sum-of-squares), but we do not know in advance what moment degrees will suffice to solve the problem. Unfortunately, even for Boolean inference, SAT solvers face the analogous difficulty.

\section{Comparison to Other Models}

We will use three existing models for probabilistic inference and show where lifted sum-of-squares logic improves in either tractability, expressiveness, or both. A short summary of these comparisons can be seen in Table 1.

\begin{table}[H]
    \centering
    \begin{tabular}{|c|c|c|}
    \hline
        \textit{Model} & Ternary Relations & Full Arithmetic\\
        \hline
        Lifted SOS & \ding{51}& \ding{51}\\
        WFOMC-OU &\ding{55}& \ding{55}\\
        PSL & \ding{51} & \ding{55}\\
        TML & \ding{51}& \ding{51}\\
        \hline
    \end{tabular}
    \caption{Comparisons of tractabilities among existing statistical relational learning methods}
    \label{tab:my_label}
\end{table}

\subsection{Tractable Markov Logic (TML)}
Tractable Markov Logic (TML), as defined in \cite{domingos2012tractable} has two very distinct characteristics. First of all, they require the objects all be inside a single hierarchy, which is highly restrictive. Secondly, they adopt weights together with those sub-class structures to maintain tractability. The explicitly strong structural assumption is not obvious for many cases including our example. Along with these drawbacks there is a fundamental difference between TML and Lifted Sum-of-Squares. Markov Logics in general focus only on bounding the maximum entropy distribution, whereas we focus on bounding all distributions. This can be particularly useful when using these models to reject a null hypothesis. By having certifiable bounds, our model can reject a hypothesis by itself without further experimentation being necessary. An example of this would be the biological benchmarks used in \cite{ribeiro2022learning}.

\subsection{Weighted First-Order Model Counting with an Open Universe (WFOMC-OU)}

As mentioned above, Belle's approach of performing weighted first-order model counting in an open universe \cite{belle2017open} is believed to be intractable for many ternary relations and beyond. This means that Example~\ref{em:sos} is likely to be intractable using this approach. However, our runtime guarantees are polynomial in the rank of our knowledge base meaning this example is well within our tractable fragment. Along with this, many model counting algorithms rely on approximation to increase their tractability. While this may be useful for some domains it runs into a similar issue as TML. If an exact answer is necessary these models may be unable to provide an answer.

\subsection{Probabilisitc Soft Logic (PSL)}

This model, introduced under the name Probabilistic Soft Logic, is another commonly applied method to solving probabilistic reasoning in relational domains \cite{kimmig2012short}. Much similar to Markov Logic Networks, Probabilisitc Soft Logic combines graphical models and first-order logic, but allows the truth values to be soft as in any time between zero and one. These methods have had success in practice but currently lack the theoretical guarantees provided in our work. In addition to this, PSL is limited to arithmetic rules of linear combinations of relations. This is in contrast to our Example 2, where we are able to leverage the term $War(x,y)L.T.(x,y,z)$ to create a rule that models conditional probability. PSL models are also an example of a model that utilizes weighted counting to handle the probabilitistic nature. This leads to many of the same drawbacks that were mentioned for WFOMC-OU.

\section{Future Work and Conclusion}
In summary our work is able to extend previous work on probabilistic inference using sum-of-squares to a first order logic. This allows for a substantially larger tractable fragment than previous work. While approach manages to employ first-order reasoning, but there may still be more simplification available for the matrices resulting from our semi-definite program. By utilizing an approach similar to that of~\cite{kmt17} for relational linear programming, one may be able to simplify the semi-definite program even more. Further, if sparsity exists in the formulation, we may also exploit it using methods mentioned in~\cite{lasserre06}. One downside of our approach is that it inherits the limitation that sum-of-squares cannot represent or reason about independence of random variables. Although the most straightforward approach to representing independence leads immediately to an intractable polynomial optimization problem, it is natural to ask if this is truly inherent.
%This could result in a larger tractable fragment.
%(Should we say something about the inability to capture independence?)
%\section*{Ethical Statement}There are no ethical issues.

\section*{Acknowledgments}
This work is supported by the National Science Foundation under Grant IIS-2214141 and IIS-1942336.

%% The file named.bst is a bibliography style file for BibTeX 0.99c
\bibliographystyle{named}
\bibliography{references.bib}

\clearpage
\newpage
\appendix

\section{Full Grounded Knowledge Base}
\label{appendix:full}

\begin{align*}
\small
War(Ant., Oct.)-War(Ant., Oct.)^2 &= 0\\
War(Ant., Cleo.)-War(Ant., Cleo.)^2 &= 0\\
War(Ant., Cic.)-War(Ant., Cic.)^2 &= 0\\
War(Ant., Cae.)-War(Ant.,Cae.)^2 &= 0\\
War(Cleo., Oct.)-War(Cleo., Oct.)^2 &= 0\\
War(Oct., Cic.)-War(Oct.,Cic.)^2 &= 0\\
War(Oct., Cae.)-War(Oct.,Cae.)^2 &=0\\
War(Cleo., Cic.)-War(Cleo., Cic.)^2 &=0\\
War(Cleo., Cae.)-War(Cleo., Cae.)^2 &=0\\
War(Cic., Cae.)-War(Cic., Cae.)^2 &= 0\\
L.T(Ant., Oct., Cleo)-L.T(Ant., Oct., Cleo)^2 &= 0\\
L.T(Ant., Oct., Cic.)-L.T(Ant., Oct., Cic.)^2 &= 0\\
L.T(Ant., Oct., Cae.)-L.T(Ant., Oct., Cae.)^2 &= 0\\
L.T(Ant., Cleo, Cic.)-L.T(Ant.,  Cleo.,Cic.)^2 &= 0\\
L.T(Ant., Cleo, Cae.)-L.T(Ant.,  Cleo.,Cae.)^2 &= 0\\
L.T(Ant., Cic, Cae.)-L.T(Ant.,  Cic.,Cae.)^2 &= 0\\
L.T(Oct.,Cleo.,Cic.)-L.T(Oct.,Cleo.,Cic.)^2 &= 0\\
L.T(Oct.,Cleo.,Cae.)-L.T(Oct.,Cleo.,Cae.)^2 &= 0\\
L.T(Cleo.,Cae.,Cic.)-L.T(Cleo.,Cae.,Cic.)^2 &= 0\\
L.T(Oct.,Cae.,Cic.)-L.T(Oct.,Cae., Cic.)^2 &= 0\\
e(L.T.(Oct., Ant., Cleo.)) -1 &\geq 0\\
e(L.T.(Cae., Ant., Cleo.)) -1 &\geq 0\\
e(L.T.(Cic., Ant., Cleo.)) -1 &\geq 0\\
e(War(Oct., Ant.) L.T.(Oct., Ant., Cleo.))\qquad&\\
-.75 e(L.T.(Oct., Ant., Cleo.)) &\geq 0\\
e(War(Oct., Ant.) L.T.(Oct., Ant., Cic.))\qquad&\\
-.75 e(L.T.(Oct., Ant., Cic.)) &\geq 0\\
e(War(Oct., Ant.) L.T.(Oct., Ant., Cae.))\qquad&\\
-.75 e(L.T.(Oct., Ant., Cae.)) &\geq 0\\
e(War(Cae., Ant.) L.T.(Cae., Ant., Cleo.))\qquad&\\
-.75 e(L.T.(Cae., Ant., Cleo.)) &\geq 0\\
e(War(Cae., Ant.) L.T.(Cae., Ant., Cic.))\qquad&\\
-.75 e(L.T.(Cae., Ant., Cic.)) &\geq 0\\
e(War(Cae., Ant.) L.T.(Cae., Ant., Oct.))\qquad&\\
-.75 e(L.T.(Cae., Ant., Oct.)) &\geq 0\\
e(War(Cic., Ant.) L.T.(Cic., Ant., Cleo.))\qquad&\\
-.75 e(L.T.(Cic., Ant., Cleo.)) &\geq 0\\
e(War(Cic., Ant.) L.T.(Cic., Ant., Oct.))\qquad&\\
-.75 e(L.T.(Cic., Ant., Oct.)) &\geq 0\\
e(War(Cic., Ant.) L.T.(Cic., Ant., Cae.))\qquad&\\
-.75 e(L.T.(Cic., Ant., Cae.)) &\geq 0
\end{align*}
\begin{align*}
e(War(Cleo., Ant.) L.T.(Cleo., Ant., Oct.))\qquad&\\
-.75 e(L.T.(Cleo., Ant., Oct.)) &\geq 0\\
e(War(Cleo., Ant.) L.T.(Cleo., Ant., Cic.))\qquad&\\
-.75 e(L.T.(Cleo., Ant.,Cic.)) &\geq 0\\
e(War(Cleo., Ant.) L.T.(Cleo., Ant., Cae.))\qquad&\\
-.75 e(L.T.(Cleo., Ant., Cae.)) &\geq 0\\
e(War(Cleo., Oct.) L.T.(Cleo., Oct., Ant.))\qquad&\\
-.75 e(L.T.(Cleo., Oct., Ant.)) &\geq 0\\
e(War(Cleo., Oct.) L.T.(Cleo., Oct., Cic.))\qquad&\\
-.75 e(L.T.(Cleo., Oct.,Cic.)) &\geq 0\\
e(War(Cleo., Oct.) L.T.(Cleo., Oct., Cae.))\qquad&\\
-.75 e(L.T.(Cleo., Oct., Cae.)) &\geq 0\\
e(War(Cleo., Cic.) L.T.(Cleo., Cic., Ant.))\qquad&\\
-.75 e(L.T.(Cleo., Cic., Ant.)) &\geq 0\\
e(War(Cleo., Cic.) L.T.(Cleo., Cic., Oct.))\qquad&\\
-.75 e(L.T.(Cleo., Cic.,Oct.)) &\geq 0\\
e(War(Cleo., Cic.) L.T.(Cleo., Cic., Cae.))\qquad&\\
-.75 e(L.T.(Cleo., Cic., Cae.)) &\geq 0\\
e(War(Cleo., Cae.) L.T.(Cleo., Cae., Ant.))\qquad&\\
-.75 e(L.T.(Cleo., Cae., Ant.)) &\geq 0\\
e(War(Cleo., Cae.) L.T.(Cleo., Cae., Oct.))\qquad&\\
-.75 e(L.T.(Cleo., Cae.,Oct.)) &\geq 0\\
e(War(Cleo., Cae.) L.T.(Cleo., Cae., Cic.))\qquad&\\
-.75 e(L.T.(Cleo., Cae., Cic.)) &\geq 0\\
e(War(Cic., Cae.) L.T.(Cic., Cae., Ant.))\qquad&\\
-.75 e(L.T.(Cic., Cae., Ant.)) &\geq 0\\
e(War(Cic., Cae.) L.T.(Cic., Cae., Oct.))\qquad&\\
-.75 e(L.T.(Cic., Cae.,Oct.)) &\geq 0\\
e(War(Cic., Cae.) L.T.(Cic., Cae., Cleo.))\qquad&\\
-.75 e(L.T.(Cic., Cae., Cleo.)) &\geq 0\\
e(War(Oct., Cae.) L.T.(Oct., Cae., Ant.))\qquad&\\
-.75 e(L.T.(Oct., Cae., Ant.)) &\geq 0\\
e(War(Oct., Cae.) L.T.(Oct., Cae., Cic.))\qquad&\\
-.75 e(L.T.(Oct., Cae.,Cic.)) &\geq 0\\
e(War(Oct., Cae.) L.T.(Oct., Cae., Cleo.))\qquad&\\
-.75 e(L.T.(Oct., Cae., Cleo.)) &\geq 0\\
e(War(Oct., Cic.) L.T.(Oct., Cic., Ant.))\qquad&\\
-.75 e(L.T.(Oct., Cic., Ant.)) &\geq 0\\
e(War(Oct., Cic.) L.T.(Oct., Cic., Cae.))\qquad&\\
-.75 e(L.T.(Oct., Cic.,Cae.)) &\geq 0\\
e(War(Oct., Cic.) L.T.(Oct., Cic., Cleo.))\qquad&\\
-.75 e(L.T.(Oct., Cic., Cleo.)) &\geq 0
\end{align*}

Permutations are omitted.

\section{Omitted Steps of Example~\ref{em:sos}  }

\label{appendix:steps}

Starting with an expression that takes the form of the law of total probability $\forall x \neq Ant., x \neq Oct. e(War(Ant.,Oct.)L.T.(Ant., Oct., x))$... We can then make substitutions using the laws of expectation and probability. These substitutions can be continued until giving a final sum-of-squares expression, which can be simplified to $e(War(Ant., Oct.)) - .75$, of:\\ 

$W^2(1-LT)^2+(1-LT)^2(W-W^2)+W(LT-LT^2)+WLT-.75LT+.75(LT-1)$\\

Where $W = e(War(Ant., Oct.))$, and $e(LT = L.T(Ant., Oct., Cleo.))$. \\

This is an expression of degree 4, meaning that by invoking Theorem~\ref{thm:sossound} we have a degree-4 sum-of-squares program that will detect infeasibility with the constraint $e(War(Ant., Oct.) \leq .75 - \delta$ for any $\delta > 0$. \\ 

First of all, we verify that indeed our expression is equivalent to the query $War(Antony, Octavian)$:

\vspace{0.5mm}
$W^2(1-LT)^2+(1-LT)^2(W-W^2)+W^2(LT-LT^2)
+W\cdot LT-.75LT+.75(LT-1)\\
=W^2((1-LT)^2-(1-LT)^2)+W((1-LT)^2+(LT-LT^2)+LT)-.75\\
=W-.75,$
\vspace{0.3mm}

\iffalse
\begin{align*}
    &W^2(1-LT)^2+(1-LT)^2(W-W^2)+W^2(LT-LT^2)
    +W\cdot LT-.75LT+.75(LT-1)\\
=&W^2((1-LT)^2-(1-LT)^2)+W((1-LT)^2+(LT-LT^2)+LT)-.75\\
=&W-.75,
\end{align*}
\fi

where $W = War(Antony, Octavian)$, and $LT = LoveTriangle(Antony, Octavian, Cleopatra)$. \\
Since every term is indeed nonnegative by either the knowledge base or because it is a square, we can use this as our sum-of-squares equation to bound the likelihood of a war between Antony and Octavian ($War(Antony, Octavian)$).

Next, we will show how this equation was constructed. Keeping the same values for $W$ and $LT$ we start with the square polynomial:
$W^2(1-LT)^2 = W^2(1-LT)^2$
%\begin{align*}W^2(1-LT)^2 &= W^2(1-LT)^2\end{align*}
which, recall, represents $W^2\geq W^2LT^2$.
Then using the Boolean axioms we can reduce the degree of $W$ and $LT$. First $LT$:
\begin{align*}
    W^2(1-LT)^2 + W^2(LT-LT^2) &= W^2(1-LT)
\end{align*}
and next, we will make a substitution of $W$ in for $W^2$:

\vspace{0.5mm}
$W^2(1-L)^2 + W^2(LT-LT^2) + (1-LT)(W-W^2) = W(1-LT)$
\vspace{0.3mm}
\iffalse
\begin{align*}
    W^2(1-L)^2 + W^2(LT-LT^2) + (1-LT)(W-W^2) &= W(1-LT)
\end{align*}
\fi

Now that we have $W\geq W\cdot LT$, we can add the expression $W\cdot LT-.75LT$ to obtain $W\geq .75LT$:\\
\vspace{0.5mm}
$W^2(1-L)^2 + W^2(LT-LT^2) + (1-LT)(W-W^2) + (W\cdot LT-.75LT) = W - .75LT$
\iffalse
\begin{align*}
    W^2(1-L)^2 + W^2(LT-LT^2) + (1-LT)(W-W^2) + (W\cdot LT-.75LT) &= W - .75LT
\end{align*}=
\fi

For the final step we will use our given bound for $LT \geq 1$ which can be re-scaled to $.75LT \geq .75$, giving us the expression $.75LT-.75$ for us to add:

$ W^2(1-L)^2 + W^2(LT-LT^2) + (1-LT)(W-W^2) + (W\cdot LT-.75LT) + (.75LT - .75) = W - .75.$
\iffalse
\begin{align*}
    W^2(1-L)^2 + W^2(LT-LT^2) + (1-LT)(W-W^2) + (W\cdot LT-.75LT) + (.75LT - .75) &= W - .75
\end{align*}
\fi

\section{Expectation Logics Example} \label{appen:expecation}
Given a population where 20\% experience an elevated heart rate (100BPM or above) and every person has a heart rate of at least 60, what can we say about the average heart rate of that population:
\begin{align*}
    e(HR) = e(HR * HighHR) + e(HR * (1 - HighHR))
\end{align*}

Now, using the distributivity axiom, and the fact that every person with a high HR has a HR over 100, we can infer:

\begin{align*}
    e(HR * HighHR) \geq e(100 * HighHR)
\end{align*}

We can then do the same with the standard population, using the fact that a heart rate must be above 60, and infer:

\begin{align*}
    e(HR * (1 - HighHR)) \geq e(60 * (1 - HighHR))
\end{align*}

Substituting into the original equation we get:

\begin{align*}
    e(HR) \geq e(100 * HighHR) + e(60 * (1 - HighHR)) \\
    e(HR) \geq 100e(HighHR) + 60e(1 - HighHR) \\
    e(HR) \geq 100 * .2 + 60 * .8\\
    e(HR) \geq 68
\end{align*}

This means that in the overall population the average heart rate is at least 68.

\end{document}